%%%%%%%% ICML 2020 EXAMPLE LATEX SUBMISSION FILE %%%%%%%%%%%%%%%%%
%%%%%%%% SLIGHTLY MODIFIED FOR USE ON ARXIV %%%%%%%%%%%%%%%%%%%%%%

\documentclass{article}

% Recommended, but optional, packages for figures and better typesetting:
\usepackage{microtype}
\usepackage{graphicx}
\usepackage{subfigure}
\usepackage{booktabs} % for professional tables

\usepackage{amsmath, amssymb, amsfonts, amsthm}

\DeclareMathOperator*{\argmin}{arg\,min}
\usepackage{natbib}

\usepackage{graphicx}

\usepackage{todonotes}

\usepackage{bbm}
\usepackage{mathrsfs}
\usepackage{stackrel}

% hyperref makes hyperlinks in the resulting PDF.
% If your build breaks (sometimes temporarily if a hyperlink spans a page)
% please comment out the following usepackage line and replace
% \usepackage{icml2020} with \usepackage[nohyperref]{icml2020} above.
\usepackage{hyperref}

\usepackage[capitalise]{cleveref}

% Attempt to make hyperref and algorithmic work together better:

%%%%%%%%%%% MACROS %%%%%%%%%%%%%%%

\newcommand{\kword}[1]{\noindent\textbf{#1}}

\newcommand{\bbR}{\mathbb{R}} % reals
\newcommand{\bbE}{\mathbb{E}} % expectation
\newcommand{\variance}{\text{\rm{Var}}}
\newcommand{\Var}{\variance}
\newcommand{\calX}{\mathcal{X}} % input space
\newcommand{\calY}{\mathcal{Y}} % label space
\newcommand{\symm}{S} % symmetrization operator
\newcommand{\T}{\mathcal{T}}  % Transofrmation set
\newcommand{\grp}{\mathcal{G}} % group
\newcommand{\calE}{\mathcal{E}}
\newcommand{\scD}{\mathscr{D}}
\newcommand{\calL}{\mathcal{L}}

 % Data for bayesian model
\newcommand{\calD}{\mathcal{D}}
\newcommand{\dgd}{P_{\calD}} % data-generating distribution

% set of transformations

 % generic distribution on \grp
\newcommand{\haar}{\lambda}
 % "uniform" distribution on \grp
\newcommand{\trdata}{\mathcal{D}^n}
\newcommand{\fclass}{F} % set of functions
 % sigma-algebra on set of functions
 % Symmetrized distribution Q
 % Symmetrized distribution P
 % 
\newcommand{\invf}[1]{#1^{\circ}} % invariant function
\newcommand{\equivf}[1]{#1^{\rm e}}
 % sampled feature av function

\newcommand{\borel}{\mathcal{B}} % borel
\newcommand{\indicator}{\mathds{\indicator}}
\newcommand{\Orbit}{\Phi}

\newcommand{\argdot}{{\,\vcenter{\hbox{\tiny$\bullet$}}\,}} %generic argument dot

\newcommand{\loss}{\ell} % loss
\newcommand{\risk}{R_{\loss}} % risk
\newcommand{\eRisk}{\widehat{R}_{\loss}} % empirical risk
\newcommand{\eRiskAug}{\invf{\widehat{R}}_{\loss}} % augmented empirical risk

\newcommand{\VarRel}[1]{\stackrel[#1]{}{\Var}}

\newcommand{\eRiskAugMC}{\widehat{R}_{\loss}^{\widehat{\circ}}} % approximate augmented empirical risk
\newcommand{\KL}[2]{\text{\rm{KL}}(#1 \; || \; #2)} % KL divergence
\newcommand{\invfMC}[1]{#1^{\widehat{\circ}}}

\def\equdist{\stackrel{\text{\rm\tiny d}}{=}}

\newcommand{\ginv}{$\grp$-invariant}

\newtheorem{theorem}[]{Theorem}
\newtheorem{proposition}[theorem]{Proposition}

\newtheorem{lemma}[theorem]{Lemma}

\setlength{\abovedisplayskip}{3pt}
\setlength{\belowdisplayskip}{1pt}

% Use the following line for the initial blind version submitted for review:
% \usepackage{icml2020}

% If accepted, instead use the following line for the camera-ready submission:
\usepackage[accepted]{icml2020-arxiv}

% The \icmltitle you define below is probably too long as a header.
% Therefore, a short form for the running title is supplied here:
\icmltitlerunning{On the Benefits of Invariance in Neural Networks}

\begin{document}

\twocolumn[
\icmltitle{On the Benefits of Invariance in Neural Networks} 

% It is OKAY to include author information, even for blind
% submissions: the style file will automatically remove it for you
% unless you've provided the [accepted] option to the icml2020
% package.

% List of affiliations: The first argument should be a (short)
% identifier you will use later to specify author affiliations
% Academic affiliations should list Department, University, City, Region, Country
% Industry affiliations should list Company, City, Region, Country

% You can specify symbols, otherwise they are numbered in order.
% Ideally, you should not use this facility. Affiliations will be numbered
% in order of appearance and this is the preferred way.
\icmlsetsymbol{equal}{*}

\begin{icmlauthorlist}
\icmlauthor{Clare Lyle}{oxcs}
\icmlauthor{Mark van der Wilk}{mvdw}
\icmlauthor{Marta Kwiatkowska}{oxcs}
\icmlauthor{Yarin Gal}{oxcs}
\icmlauthor{Benjamin Bloem-Reddy}{ubc}
\end{icmlauthorlist}

\icmlaffiliation{oxcs}{Department of Computer Science, University of Oxford, Oxford, United Kingdom}
\icmlaffiliation{mvdw}{Department of Computing, Imperial College London, London, United Kingdom}
\icmlaffiliation{ubc}{Department of Statistics, University of British Columbia, Vancouver, Canada}
% \icmlaffiliation{ati}{Alan Turing Institute, London, United Kingdom}

\icmlcorrespondingauthor{Clare Lyle}{clare.lyle@univ.ox.ac.uk}

% You may provide any keywords that you
% find helpful for describing your paper; these are used to populate
% the "keywords" metadata in the PDF but will not be shown in the document
% \icmlkeywords{Machine Learning, ICML}

\vskip 0.3in
]

% this must go after the closing bracket ] following \twocolumn[ ...

% This command actually creates the footnote in the first column
% listing the affiliations and the copyright notice.
% The command takes one argument, which is text to display at the start of the footnote.
% The \icmlEqualContribution command is standard text for equal contribution.
% Remove it (just {}) if you do not need this facility.

\printAffiliationsAndNotice{}  % leave blank if no need to mention equal contribution
% \printAffiliationsAndNotice{\icmlEqualContribution} % otherwise use the standard text.

\begin{abstract}
Many real world data analysis problems exhibit invariant structure, and models that take advantage of this structure have shown impressive empirical performance, particularly in deep learning. While the literature contains a variety of methods to incorporate invariance into models, theoretical understanding is poor and there is no way to assess when one method should be preferred over another. In this work, we analyze the benefits and limitations of two widely used approaches in deep learning in the presence of invariance: data augmentation and feature averaging. We prove that training with data augmentation leads to better estimates of risk and gradients thereof, and we provide a PAC-Bayes generalization bound for models trained with data augmentation. We also show that compared to data augmentation, feature averaging reduces generalization error when used with convex losses, and tightens PAC-Bayes bounds. We provide empirical support of these theoretical results, including a demonstration of why generalization may not improve by training with data augmentation: the `learned invariance' fails outside of the training distribution. 
\end{abstract}

\section{Introduction}

Many real-world problems exhibit invariant structure. Tasks involving set-valued inputs such as point clouds are invariant to permutation. Image classification tasks are often rotation- and translation-invariant. Intuitively, models that capture the invariance of a problem should perform better than those that do not. This is supported by empirical results in a range of applications \cite{Cohen:Welling:2016, fawzi2016adaptive, salamon2017deep}.  

There are many ways of incorporating invariance into a model. One can build the invariance into the network as a convolution or weight-tying scheme, or average network predictions over transformations of the input (feature averaging), or simply train on a dataset augmented with these transformations (data augmentation). Each of these approaches has been demonstrated to perform well in various settings, but there remains a large divide between their impressive practical performance and solid theoretical understanding. 

The lack of theory leaves open a number of questions. Firstly, if invariance is incorporated into a model or training algorithm, what are the theoretical guarantees on the performance of the trained model? Relatedly, as a matter of practice, how should a practitioner choose amongst the different approaches to incorporating invariance? 
Concretely, if an invariant model and a model trained with data augmentation both attain the same training error, which one should be preferred? Can one or the other be expected to converge faster? 
These questions are the key motivation for our work. 

We focus the two most generically applicable methods, data augmentation  and feature averaging. 
Our overall conclusion is that \textbf{feature averaging is better than data augmentation is better than doing nothing}; this holds even for stochastic (Monte Carlo) approximations of the averages involved in feature averaging and data augmentation. On the journey to the main conclusion, we uncover a number of intriguing properties and shed light on the mathematical structure driving the impressive practical performance of the methods.

\begin{table*}[bt]
  \vspace{-0.15in}
  \caption{Summary of theoretical results.}
  \label{tab:theory:summary}
  \begin{center}
  \resizebox{\textwidth}{!}{
  	\begin{tabular}{lccccc}
  	  \toprule
  	    & \textbf{Baseline} &  & \textbf{Data Augmentation} &  & \textbf{Feature Averaging} \\
  	  \midrule
  	  Expected risk & $\risk(f)$ & $=$ & $\risk(f)$ & $\overset{\textrm{convex $\loss$}}{\geq}$ & $\risk(\invf{f})$ \\
  	  & & \small{\cref{prop:da:variance:reduction}} & & \small{\cref{prop:empirical:risk:order}} &  \\
  	  \midrule
  	  Empirical risk & $\eRisk(f,\trdata)$ & & $\eRiskAug(f,\trdata)$ & $\overset{\textrm{convex $\loss$}}{\geq}$ & $\eRiskAug(\invf{f},\trdata) = \eRisk(\invf{f},\trdata)$  \\
  	  & & & & \small{\cref{prop:empirical:risk:order}} &  \\
  	  \midrule
  	  Variance of $\eRisk$  & $\VarRel{\trdata\sim\dgd}[\eRisk(f,\trdata)]$ & $\geq$ & $\VarRel{\trdata\sim\dgd}[\eRiskAug(f,\trdata)]$ & $\overset{\textrm{convex $\loss$}}{\geq}$ & $\VarRel{\trdata\sim\dgd}[\eRisk(\invf{f},\trdata)]$ \\
  	   & & \small{\cref{prop:da:variance:reduction}} & & \small{\cref{prop:empirical:risk:order}} &  \\
  	  \midrule
  	  KL term in & $\KL{Q}{P}$ & $=$ & $\KL{Q}{P}$ & $\geq$ & $\KL{\invf{Q}}{\invf{P}}$ \\
  	  PAC-Bayes bound & & & & \small{\cref{lemma:KL:gen}} & \\
  	  \midrule
  	  PAC-Bayes bound & $B_0$ & $=$ & $B_{\textrm{DA}}$ & $\geq$ & $B_{\textrm{FA}}$ \\
  	  for 0-1 loss & & \small{\cref{thm:pac:bayes:da}} & & \small{\cref{thm:bound:order}} & \\
  	  \midrule
  	  Monte Carlo approx. & & & PAC-Bayes bound holds & & $\KL{Q}{P} \geq \KL{\invfMC{Q}_{G^k}}{\invfMC{P}_{G^k}}$ \\
  	  ($k\geq 1$ samples) & & & \small{\cref{thm:pac:bayes:da}} & & $\geq \KL{\invf{Q}}{\invf{P}}$ \\
  	  \bottomrule
  	\end{tabular}
  	}
  \end{center}
  \vspace{-.2in}
\end{table*}

\subsection{Summary of Results}

We consider the data-generating distribution $\dgd$ to be invariant to the action of a group $\grp$: $\dgd(gX, Y) = \dgd(X,Y)$, for all $g \in \grp$ (see \cref{sec:background} for details). Our main results relate baseline training of a generic neural network (or other predictive model) via empirical risk minimization (ERM) to performing either data augmentation or feature averaging. \Cref{tab:theory:summary} summarizes the theoretical results.

\textbf{Data augmentation} (DA) (\cref{sec:data:augmentation}) improves on baseline training with ERM by minimizing an {augmented risk}, the risk averaged over the orbits induced by $\grp$. This yields a lower-variance {estimator} of the model risk and its minima (\cref{prop:da:variance:reduction}). The variance reduction also applies to gradients of the risk, and therefore affects gradient-based learning. Our results to this end are essentially the same as some by \citet{chen2019invariance}. In contrast to that work, we investigate PAC-Bayes bounds for generalization of DA. Traditional PAC-Bayes bounds based on i.i.d.\ data do not apply to DA because the augmented dataset violates the i.i.d.\ assumption. We show that the i.i.d.\ bounds also apply to DA and in particular to the augmented risk (\cref{thm:pac:bayes:da}), and that tighter bounds may be possible. However, training with DA is not guaranteed to produce an invariant (or even approximately invariant) function. We demonstrate empirically how this can fail; we also provide an example where minimizing the augmented risk yields an invariant function (\cref{sec:da:ood}).

\textbf{Feature averaging} (FA) (\cref{sec:feature:averaging}) yields a lower-entropy {function class}. In the case of convex loss, FA also obtains lower expected risk than DA and lower-variance estimates of risk and its gradient (\cref{prop:empirical:risk:order}). Furthermore, symmetrization compresses the model, and thus tightens PAC-Bayes bounds by reducing the KL term, a phenomenon that holds even for Monte Carlo approximations to FA (\cref{lemma:KL:gen,prop:kl:chain}). As a byproduct, we prove a general lemma (\cref{lem:pushforward:KL}) that connects the present paper to work on generalization and post-training compression \citep{zhou2018nonvacuous}. 

We illustrate our theoretical results with experiments in \cref{sec:experiments}, and also investigate practical questions raised by the theory. We conclude (\cref{sec:conclusion}) by interpreting the theory and experiments as practical recommendations.

\section{Background} \label{sec:background}

``Invariance'' has been used to describe a number of related but distinct phenomena in the machine learning and statistics literature. One perspective, which is shared by the present work, considers invariance of a neural network's output with respect to a group acting on its inputs \citep[e.g.,][]{Cohen:Welling:2016,Kondor:Trivedi:2018,invariantdistributions}.\footnote{These ideas (and the results in the present work) apply generally to functions, and therefore to a broader set of machine learning techniques; we focus on neural networks for continuity with the previous literature.} Other work has used looser notions. For example, \citet{zou2012deep} use ``invariant'' to mean ``not changing very much''. Related ideas are ``local invariance'' \citep{raj2017orbit_embeddings}, ``insensitivity'' \citep{invariantgps}, and ``approximate invariance'' \citep[][Sec.~6]{chen2019invariance}. 

We focus on invariance under the action of a group $\grp$. The {action} of $\grp$ on a set $\calX$ is a mapping $\alpha : \grp \times \calX \to \calX$ which is compatible with the group operation. For convenience, we write $\alpha(g,x) = \alpha_x(g) = gx$, for $g \in \grp$ and $x \in \calX$. The \kword{orbit} of any $x\in\calX$ is the subset $\grp_x$ of $\calX$ that can be obtained by applying an element of $\grp$ to $x$, $\grp_x = \{ gx : g \in \grp \}$. For mathematical simplicity, we assume $\grp$ to be compact, with
(unique) normalized Haar measure denoted by $\haar$.\footnote{$\haar$ is analogous to the uniform distribution on $\grp$. Our results generalize---with some additional technicalities---to any group that acts property on $\calX$ and has Haar measure.} 
We denote a random element of $\grp$ by $G$. 
A mapping $f : \calX \to \calY$ is \kword{invariant} under $\grp$ (or $\grp$-invariant) if
\begin{align} \label{eq:invariant:function}
  f(gx) = f(x) \;, \quad g\in \grp,\ x \in \calX \;.
\end{align}
Any function $f: \calX \to \bbR$ can be \kword{symmetrized} by averaging over $\grp$. We denote this with a symmetrization operator $\symm_{\grp}$, defined as
\begin{align} \label{eq:symmetrization:def}
  \invf{f}(x) := \symm_{\grp}f(x) = \bbE_{G\sim\haar}[f(G x)] \;, \quad x \in \calX \;.
\end{align}

We consider a typical machine learning scenario, with a training data set $\trdata$ of $n$ observations $(X_i, Y_i)_{i=1}^n\in (\calX,\calY)^n$ sampled i.i.d.\ from some (unknown) probability distribution $\dgd$. 
Furthermore, $\dgd$ \emph{is known or assumed to be $\grp$-invariant},
\begin{align} \label{eq:invariant:dgd}
  \dgd(gX, Y) = \dgd(X ,Y ) \;, \quad g \in \grp \;.
\end{align}
For example, $X$ may be an image of an animal, $Y$ a label of the animal, and $\grp$ the group of two-dimensional rotations. 

The marginal distribution on $X$ of any \ginv\ $\dgd$ has a disintegration into a distribution $P_{\Orbit}$ over orbits of $\calX$, each endowed with an \kword{orbit representative} $\Orbit\in\calX$, and a conditional distribution $P_{X | \Orbit}=\haar \circ \alpha^{-1}_{\Orbit}(\argdot)$ induced by applying a random $G \sim \haar$ to $\Orbit$ \citep[see, e.g.,][]{invariantdistributions}. That is, $(X,Y) \equdist (G\Orbit,Y)$ and $\dgd = P_{\Orbit} \times P_{X | \Orbit} \times P_{Y|X}$. 
The specific relevance to this work is that expectations with respect to $\dgd$ can be iterated as
  \begin{align} \label{eq:iterated:expectations}
  	& \bbE_{(X,Y)\sim\dgd}[f(X,Y)] \\
  	&\ \ = \bbE_{Y\sim P_{Y|X}}[\bbE_{\Orbit\sim P_{\Orbit}}[ \bbE_{G\sim\haar}[f(G\Orbit,Y) \mid \Orbit,Y] \mid Y ] ] \;. \nonumber
  \end{align}

For a class of functions $F= \{f : \calX \to \calY\}$, a probability distribution $Q$ on $F$, and a loss function $\loss : \calY \times \calY \to \bbR_+$. We denote various expected and empirical risks as follows:
\begin{align*}
  \risk(f) &= \mathbb{E}_{(X,Y)\sim\dgd}[\ell (f(X), Y)] \\
   \risk(Q) &= \mathbb{E}_{f \sim Q}[ \risk(f)] \\
  \eRisk(f, \trdata) &= \textstyle\frac{1}{n}\textstyle\sum_{i=1}^n \ell(f(X_i), Y_i) \\
   \eRisk(Q, \trdata) &= \mathbb{E}_{f\sim Q} [\eRisk(f, \trdata)]
\end{align*}

\subsection{Modes of Invariance}

Common sense indicates that when modeling \ginv\ $\dgd$, any good model will also be \ginv, at least to a good approximation. This has been achieved in practice through one of three approaches: trained invariance, encouraged during training via DA; network symmetrization, typically implemented as FA; and symmetric network design, obtained by composing a \ginv\ layer with a sequence of $\grp$-equivariant layers. 

\kword{Trained invariance} is implemented as DA \citep{fawzi2016adaptive,Cubuk2018}: (possibly random) elements $G_{ij}$ of $\grp$ are applied to each observation $X_i$ of the training data, with the label $Y_i$ left unchanged. The result is an augmented dataset $\trdata_{\grp} = ( (G_{ij}  X_i, Y_i)_{j \leq m} )_{i\leq n}$ used to minimize the augmented empirical risk
\begin{align} \label{eq:aug:risk}
  \eRiskAug(f,\trdata) & = \frac{1}{n}\sum_{i=1}^n  \bbE_{G\sim\haar}[\loss(f(G  X_i),Y_i) ] \\
  & \approx \frac{1}{nm} \sum_{i=1}^n \sum_{j=1}^m \loss(f(G_{ij} X_i),Y_i) \nonumber \;.
\end{align}
DA is now a standard method in practitioners' toolkit \citep{iyyer-etal-2014-neural,zhou2015predicting,salamon2017deep, machinehealth}, particularly due to its ease-of-implementation and flexibility: $\grp$ may be a set of transformations that is not a group, which permits its use for encouraging exact or approximate invariance under an arbitrary set of transformations. 
Networks trained with DA have been observed to exhibit greater invariance to the desired transformations than those trained on the original dataset \citep{fawzi2016adaptive} despite the fact that invariance is not part of the built-in network architecture.
Moreover, it can have positive effects on generalization even when the augmentation transformations are not present in the test set \citep{zhang2016understanding}.

Theoretical understanding of DA is still being developed. 
Recent theoretical work has established connections to FA and variance reduction methods. Specifically, \citet{kerneltheory} showed that for a kernel linear classifier, minimizing the augmented risk is equivalent, to first order, to minimizing the feature averaged risk; and that a second-order approximation to the objective is equivalent to data-dependent variance regularization. \Citet{chen2019invariance} showed that averaging over the set of transformations is a form of Rao--Blackwellization, and the resulting variance-reduction yields a number of desirable theoretical statistical properties. 

\kword{Architectural invariance} restricts the function class being learned to contain only invariant functions, typically through either  FA or symmetric function composition. FA relies on computing an average over $\grp$ at one or more layers, such that the overall network is invariant under $\grp$ acting on the input. 
In practice, averaging is typically done at the penultimate or final layer, resulting in a $\grp$-invariant network ${f}^{\circ}$. A network $f$ with $D$ layers is written as the composition of $h_D \circ \dots \circ h_1$, with the shorthand $h_{d}^{d'}$ referring to the composition of layers $d$ through $d'$. The empirical risk of a network with FA at layer $d$ evaluated on $\trdata$ is
\begin{align*} 
  \eRisk(\invf{f},\trdata) 
    = \frac{1}{n}\sum_{i=1}^n \loss\big(h_{d}^D\circ  \bbE_{G\sim\haar}[ h_{1}^{d-1} (G X_i) ], Y_i \big) \;.
\end{align*}

As with DA, FA can be applied to approximate and non-group invariance. The average over $\grp$ might also be estimated by applying randomly sampled elements of $\grp$, though when $h^D_{d}$ is nonlinear the estimate of $\invf{f}$ may be biased. Unlike DA, symmetrization guarantees that the output function $f^\circ$ will be invariant to $\grp$ whenever the expectation over $\grp$ can be computed exactly. The exact computation of this expectation, however, can be computationally expensive (linear in $|\grp|$ when discrete) or even intractable (when $\grp$ is infinite), in which case Monte Carlo estimates can be used. 

The elegant, albeit less generically applicable approach of symmetric function composition uses properties of $\grp$ to determine particular functional forms that are {equivariant} or invariant under $\grp$. An invariant network $\invf{f}$ is constructed by composing an invariant function (layer) $\invf{h}$ with a sequence of equivariant functions $(\equivf{h}_k)$: $\invf{f} = \invf{h} \circ \equivf{h}_D \circ \dotsb \circ \equivf{h}_1$. A body of literature of varying degrees of generality has developed \citep{Wood:ShaweTaylor:1996,ravanbakhsh2017equivariance,Kondor:Trivedi:2018,invariantdistributions,cohenetal2019generaltheory}. This includes convolutional networks. 
Empirical results indicate that this approach has advantages over trained invariance \citep[e.g.,][]{Cohen:Welling:2016}. Theoretical results to this end are lacking, with the notable exception of the VC-dimension-based PAC bounds obtained by \citet{ShaweTaylor:1991:PAC,ShaweTaylor:1995:SampleSizes}, which connect a tighter generalization bound to the reduction in parameters that results from symmetry constraints. We do not consider equivariant-invariant architectures further, and leave their theory as future work.

\subsection{PAC-Bayes Generalizations Bounds}
\label{sec:pac:bayes}
Understanding the generalization performance of deep learning models is a core research objective of modern machine learning. Many empirical results appear counterintuitive, and remain largely unexplained by theory. Networks with many more parameters than observations may generalize well, despite also having the capacity to memorize the training set \citep{zhang2016understanding}. Uniform generalization bounds often result in \emph{vacuous} bounds, i.e., they are greater than the upper bound of the loss function \citep{dziugaite2017nonvacuous}. 
However, PAC-Bayes bounds \citep{McAllester_1999} have been successfully applied to large deep network architectures to obtain nonvacuous generalization guarantees \citep{dziugaite2017nonvacuous,dziugaite2018dependent,zhou2018nonvacuous}.

PAC-Bayes bounds characterize the risk of a randomized prediction rule; the randomizaton is interpreted as a Bayesian posterior distribution $Q$ that can depend on $\trdata$. The typical bound on generalization error is expressed in terms of the empirical risk and the KL divergence between $Q$ a fixed prior distribution $P$. The following is a standard bound due to \citet{catoni}, which holds for general data generating distributions $\dgd$ and 0-1 loss.

\begin{theorem}[\citet{catoni}]\label{thm:catoni:bound}
  Let $\mathcal{D}^n$ be sampled i.i.d. from $\dgd$, and let $\loss$ be 0-1 loss. 
  For any prior $P$ and any $\delta\in (0,1)$, with probability $1-\delta$ over samples $\trdata$,  for all posteriors $Q$ and for all $\beta > 0$,
	\begin{equation} \label{eq:catoni:bound}
	     \risk(Q) \leq \frac{ 1 - e^{-\beta \eRisk(Q, \trdata) - \frac{1}{n}(\KL{Q}{P} + \log \frac{1}{\delta})} }{1 - e^{-\beta}} \;.
	\end{equation}
\end{theorem}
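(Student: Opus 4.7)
My plan is to follow the standard three-step PAC-Bayes recipe, specialized to the exponential moment that is tight for Bernoulli losses. First, derive an exponential-moment identity for $\eRisk(f,\trdata)$ at a fixed $f$. Second, lift it from a fixed $f$ to an arbitrary (data-dependent) posterior $Q$ via the Donsker--Varadhan change-of-measure inequality, which is what introduces the $\KL{Q}{P}$ term. Third, turn the resulting expectation under $P$ into a high-probability statement using Markov's inequality, and close with Jensen's inequality to pass from an expectation over $Q$ of a nonlinear function of $\risk(f)$ to the same function of $\risk(Q)$.

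For the first step, fix $f$. Under $\dgd$ the per-sample losses $\loss(f(X_i),Y_i)$ are i.i.d.\ Bernoulli$(\risk(f))$, and a direct computation of the Bernoulli Laplace transform shows that for every $\beta>0$,
\begin{equation*}
\bbE_{\trdata\sim\dgd}\!\left[\frac{e^{-\beta n\,\eRisk(f,\trdata)}}{(1-(1-e^{-\beta})\risk(f))^{n}}\right] = 1.
\end{equation*}
Call the integrand $W_f(\trdata)\ge 0$. By Tonelli, $\bbE_{\trdata}\bbE_{f\sim P}[W_f(\trdata)]=1$, and Markov then yields $\bbE_{f\sim P}[W_f(\trdata)]\le 1/\delta$ with probability at least $1-\delta$ over $\trdata$. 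For every $\trdata$ and every $Q$, the Donsker--Varadhan variational formula gives
\begin{equation*}
\bbE_{f\sim Q}[\log W_f(\trdata)] \le \KL{Q}{P} + \log\bbE_{f\sim P}[W_f(\trdata)].
\end{equation*}

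On the Markov-favorable event, combining the two displays above and expanding $\log W_f$ produces
\begin{equation*}
-\beta\,\eRisk(Q,\trdata) - \bbE_{f\sim Q}[\log(1-(1-e^{-\beta})\risk(f))] \le \tfrac{1}{n}(\KL{Q}{P}+\log\tfrac{1}{\delta}).
\end{equation*}
The map $p\mapsto -\log(1-(1-e^{-\beta})p)$ is convex on $[0,1]$, so Jensen's inequality gives $\bbE_{f\sim Q}[-\log(1-(1-e^{-\beta})\risk(f))] \ge -\log(1-(1-e^{-\beta})\risk(Q))$, which is in the correct direction for upper-bounding $\risk(Q)$. Substituting, exponentiating, and solving for $\risk(Q)$ reproduces \cref{eq:catoni:bound}. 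The one substantive step worth care is verifying that Jensen runs in the favorable direction; the rest is bookkeeping. The ``for all $\beta>0$'' clause should be read as fixing $\beta$ a priori (or handled by a union bound over a grid of $\beta$'s, at the cost of an additional $\log$ factor).
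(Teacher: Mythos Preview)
Your argument is correct. The paper does not give its own proof of \cref{thm:catoni:bound}; it is cited from Catoni. However, in the appendix the paper observes that it is a special case of the general PAC-Bayes template of \citet{leveretal2013tighterPACbayes} (their Theorem~1, restated as \cref{lem:lever:bound}) with the choices $t=n$, $A(f)=\eRisk(f,\trdata)$, $B(f)=\risk(f)$, and $\scD_C(q,p)=-\log(1-p(1-e^{-C}))-Cq$, together with the calculation $\calL_P=1$.

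Your direct proof is exactly the unpacking of that template: your Bernoulli moment identity is the statement $\calL_P=1$; Donsker--Varadhan plus Markov is precisely the mechanism behind \cref{lem:lever:bound}; and your Jensen step is the convexity of $\scD_C$ in its second argument used to pass from $\bbE_{f\sim Q}[\risk(f)]$ inside the nonlinear $\scD_C$ to $\risk(Q)$. So the two routes coincide; yours is simply self-contained rather than invoking the abstract lemma. Your caveat about ``for all $\beta>0$'' is also appropriate: the paper's formulation inherits the same issue, and in practice $\beta$ (equivalently $C$) is fixed before seeing the data or handled by a union bound over a grid.
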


For bounded loss functions, analogous bounds are in terms of the so-called KL generalization error \citep[see, e.g.,][]{dziugaite2017nonvacuous,dziugaite2018dependent}. We state all results only for variations of Catoni's bound \eqref{eq:catoni:bound}, but versions for KL generalization error are straightforward to derive.

\section{Data Augmentation Reduces Variance} \label{sec:data:augmentation}

In this section, we discuss the ways in which DA performs better than baseline ERM, and establish the validity of a PAC-Bayes bound for models trained with DA. 

Recently, \citet{chen2019invariance} established that when $\dgd$ is \ginv, DA reduces the variance of ERM-based estimators by (approximately) averaging loss over the orbits of the observations, which can be seen as a form of Rao--Blackwellization. Specifically, for any integrable $h : \calX \times \calY \to \bbR$, symmetrizing is equivalent to taking the conditional expectation, conditioned on the orbit of $X$:
\begin{align*}
  \bbE_{G\sim\haar}[h(GX,Y)] = \bbE_{(X,Y)\sim\dgd}[h(X,Y) \mid \Orbit(X)] \;.
\end{align*}
The average of $\grp$ appears in the augmented empirical risk \eqref{eq:aug:risk}, and reduces the variance of risk estimates. Specifically, the variance of the risk decomposes into within-orbit and across-orbit terms, and the within-orbit term vanishes for the augmented risk. The result follows directly from \citet[][Lemma 4.1]{chen2019invariance}; we also give a proof in \cref{appx:proofs} that highlights the structure of the problem. 
\begin{proposition}[\citet{chen2019invariance}] \label{prop:da:variance:reduction}
  If $\dgd$ is \ginv\ and $\loss(f(\argdot),\argdot) \in L_2(\dgd)$, then 
  \begin{align*} 
  \begin{gathered}
  	\bbE_{\trdata\sim\dgd^n}\big[ \eRiskAug(f,\trdata)  \big] = \bbE_{\trdata\sim\dgd^n}\big[ \eRisk(f,\trdata)  \big] \;, \text{ and} \\ 
  	\Var_{\trdata\sim\dgd^n}\big[ \eRiskAug(f,\trdata)  \big] \leq \Var_{\trdata\sim\dgd^n}\big[ \eRisk(f,\trdata)  \big] \;. 
  \end{gathered}
  \end{align*}
\end{proposition}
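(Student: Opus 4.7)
The plan is to prove both statements via Rao--Blackwellization, identifying the augmented risk as a conditional expectation of the empirical risk given the orbits, and then invoking the tower property for the mean equality and the law of total variance for the inequality.

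First, I would establish the key representation: under $\grp$-invariance of $\dgd$, the disintegration $(X,Y) \equdist (G\Orbit,Y)$ with $G\sim\haar$ independent of $(\Orbit,Y)$ (given in the Background section) implies that for any integrable $h$,
\begin{equation*}
  \bbE_{G\sim\haar}[h(GX,Y) \mid \Orbit(X),Y] = \bbE[h(X,Y) \mid \Orbit(X),Y] \;.
\end{equation*}
Applying this to $h = \loss(f(\argdot),\argdot)$ and using the iterated expectations formula \eqref{eq:iterated:expectations} yields both (i) $\bbE_{G\sim\haar}[\loss(f(GX),Y)] = \bbE[\loss(f(X),Y)\mid\Orbit(X),Y]$ pointwise, and (ii) $\bbE_{(X,Y)\sim\dgd}[\loss(f(GX),Y)] = \bbE_{(X,Y)\sim\dgd}[\loss(f(X),Y)]$ after a further expectation over $(\Orbit,Y)$. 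The second identity, summed over the $n$ i.i.d.\ observations, immediately gives $\bbE[\eRiskAug(f,\trdata)] = \bbE[\eRisk(f,\trdata)]$.

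For the variance inequality, let $\mathcal{F}_n$ be the $\sigma$-algebra generated by $(\Orbit(X_i),Y_i)_{i=1}^n$. Because the observations are i.i.d.\ and for each $i$ the inner conditional expectation depends only on $(\Orbit(X_i),Y_i)$, the representation above extends to
\begin{equation*}
  \eRiskAug(f,\trdata) = \bbE[\eRisk(f,\trdata) \mid \mathcal{F}_n] \;.
\end{equation*}
The law of total variance then gives
\begin{equation*}
  \Var_{\trdata\sim\dgd^n}[\eRisk(f,\trdata)] = \bbE\bigl[\Var[\eRisk(f,\trdata)\mid\mathcal{F}_n]\bigr] + \Var_{\trdata\sim\dgd^n}[\eRiskAug(f,\trdata)] \;,
\end{equation*}
and the first term on the right is nonnegative (finite by the $L_2$ hypothesis on $\loss(f(\argdot),\argdot)$), which yields the desired inequality.

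The main obstacle is just bookkeeping around the disintegration: one has to be careful that $G$ can be taken independent of $(\Orbit,Y)$ so that ``integrate $G$ against $\haar$'' genuinely matches ``condition on the orbit.'' Once that identification is in place, the $L_2$ assumption guarantees all conditional variances are well defined and the two claims reduce to standard Rao--Blackwell arguments; the independence across $i$ in the i.i.d.\ sample means no cross terms complicate the conditional variance decomposition.
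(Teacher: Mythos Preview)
Your proposal is correct and follows essentially the same Rao--Blackwellization route the paper outlines: identify $\eRiskAug$ as the conditional expectation of $\eRisk$ given the orbit data, then apply the tower property for the mean equality and the law of total variance for the inequality. The only (minor) difference is that you condition on $(\Orbit(X_i),Y_i)_{i\leq n}$ whereas the paper's sketch and the proof of \cref{prop:empirical:risk:order} condition on $\Orbit^n$ alone; your choice is the more precise one, since $\eRiskAug$ is measurable with respect to the former but not the latter in general.
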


\subsection{Practical Data Augmentation}

Computing $\bbE_{G\sim\haar}[\loss(f(GX),Y)]$ exactly may be infeasible: $\grp$ may be discrete but large, or $\grp$ may be continuous. In either case, practical DA relies on Monte Carlo estimates, typically within stochastic gradient descent (SGD). Specifically, with $G_{ij} \sim \haar$, $\eRiskAug$ is approximated by
\begin{align} \label{eq:aug:risk:mc}
  \eRiskAugMC(f,\trdata) := \frac{1}{nm} \sum_{i=1}^n \sum_{j=1}^m \loss(f(G_{ij} X_i),Y_i) \;.
\end{align}
For ``nice'' loss functions---those for which we can interchange differentiation and $\bbE_{G\sim\haar}$---the symmetrization reduces the variance of gradient estimates of augmented risk. Conversely, the variance of the Monte Carlo estimate of \eqref{eq:aug:risk:mc} may offset the reduction obtained from averaging. Furthermore, it has been argued that the noise in SGD implicitly regularizes the objective \citep{neyshabur2017thesis}; excessive variance reduction may have harmful effects. In short, the consequences of the interplay between the variance reduction of symmetrization and the variance increase of approximating that symmetrization, especially in the context of SGD for overparameterized models, are not clear. The details of those trade-offs are beyond the scope of this paper; we briefly investigate the effects empirically in \cref{sec:experiments}.

\subsection{Data Augmentation and Trained Invariance} \label{sec:da:ood}

While DA is sometimes referred to as an approach to train an invariant function, the learned function will not be invariant in general. The objective of training with DA is to minimize a symmetrized risk, not to find a symmetric function. 

One setting in which minimizing the augmented risk will yield an invariant function is with a linear model $f_w(X) = w^{\top}X$ and a convex loss, and with $\grp$ a group whose action on $\calX$ has a linear representation. To state the result, let $V$ be a $d$-dimensional vector space over $\bbR$ with dual vector space $V^*$, and assume that $\calX$ spans $V$. Furthermore, let $\grp$ admit a linear representation, $\rho : \grp \to GL(V)$, with corresponding dual $\rho^*_g = \rho_{g^{-1}}^{\top}$. 

\begin{proposition}\label{theorem:symmgd}
  Suppose that $\grp$ has a linear representation, as described above, let $f_w(X) = w^{\top}X$ and $\loss$ be strictly convex. Then the (global) minimizer $\hat{w}$ satisfies $\rho_g^* \hat{w} = \hat{w}$ for $\haar$-almost all $g\in\grp$. In particular, $f_{\hat{w}}$ is $\grp$-invariant.
\end{proposition}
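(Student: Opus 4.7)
The plan is to combine three ingredients: Haar-invariance of the augmented risk under the dual action $w\mapsto\rho_g^*w$, Jensen's inequality for strictly convex $\loss$, and the assumption that $\calX$ spans $V$. The first two will force any global minimizer to coincide with its own symmetrization, and the third converts that into pointwise weight equality.

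First, write $g(w) := \eRiskAug(f_w,\trdata)$ and show $g(\rho_h^*w) = g(w)$ for every $h\in\grp$. This is a one-line change of variables: since $(\rho_h^*w)^{\top}\rho_G X_i = w^{\top}\rho_{h^{-1}G}X_i$, left-invariance of $\haar$ (so $G':=h^{-1}G\sim\haar$) leaves the inner expectation unchanged. Next, introduce the symmetrized weight $\bar w := \bbE_{H\sim\haar}[\rho_H^*w]$ and apply Jensen's inequality pointwise in $(i,G)$:
\begin{align*}
\loss(\bar w^{\top}\rho_G X_i,Y_i)\ \leq\ \bbE_H[\loss((\rho_H^*w)^{\top}\rho_G X_i,Y_i)].
\end{align*}
Averaging over $i$, integrating over $G$, and using the dual-action invariance from the previous step gives $g(\bar w)\leq \bbE_H[g(\rho_H^*w)] = g(w)$. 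Strict convexity of $\loss$ makes each pointwise Jensen strict unless $H\mapsto (\rho_H^*w)^{\top}\rho_G X_i$ is $\haar$-a.e.\ constant in $H$.

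Now specialize to a global minimizer $\hat w$. Minimality gives $g(\hat w)\leq g(\bar{\hat w})$, while the Jensen step gives $g(\bar{\hat w})\leq g(\hat w)$, so equality holds throughout. Hence, for each $i$ and $\haar$-a.e.\ $G$, the map $H\mapsto \hat w^{\top}\rho_{H^{-1}G}X_i$ is $\haar$-a.e.\ constant; after reparameterizing $G':=H^{-1}G$, this says $G'\mapsto \hat w^{\top}\rho_{G'}X_i = (\rho_{G'}^*\hat w)^{\top}X_i$ is $\haar$-a.e.\ constant, for each $i$. Averaging this constant in $G'$ identifies it with $\bar{\hat w}^{\top}X_i$, so $(\rho_{G'}^*\hat w - \bar{\hat w})^{\top}X_i = 0$ for $\haar$-a.e.\ $G'$ and every $i$.

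To finish, I would apply the same argument to the expected augmented risk, where the finite sum is replaced by $\bbE_{(X,Y)\sim\dgd}$: then the vanishing identity holds for $\dgd$-a.e.\ $X$, and because $\calX$ spans $V$ by hypothesis, the annihilator is trivial and $\rho_{g}^*\hat w = \bar{\hat w}$ for $\haar$-a.e.\ $g$. Continuity of $g\mapsto\rho_g^*$ combined with the full support of $\haar$ on the compact group $\grp$ upgrades ``$\haar$-a.e.'' to ``all $g$''; evaluating at $g=e$ yields $\hat w = \bar{\hat w}$, so $\rho_g^*\hat w = \hat w$ for all $g$. The invariance of $f_{\hat w}$ is then immediate, since $f_{\hat w}(gx) = \hat w^{\top}\rho_g x = (\rho_{g^{-1}}^*\hat w)^{\top}x = \hat w^{\top}x$. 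The main obstacle is bridging from the empirical $\{X_i\}$, which need not span $V$, to the stated hypothesis on $\calX$; the cleanest fix is to run the argument at the population level and quote iterated expectations, while noting that on the data-supported subspace the conclusion already holds.
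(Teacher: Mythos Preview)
Your approach is essentially the paper's: show the symmetrized weight $\bar w$ achieves augmented risk no greater than $w$, then use strict convexity to force any minimizer to coincide with its own symmetrization. The paper's execution is more direct, however. Since the augmented risk already contains $\bbE_{G\sim\haar}[\loss(w^{\top}\rho_G X_i,Y_i)]$, one applies Jensen straight to this $G$-average,
\[
\eRiskAug(f_{\hat w},\trdata) \;>\; \frac{1}{n}\sum_{i} \loss\big(\bbE_{G}[\hat w^{\top}\rho_G X_i],\,Y_i\big) \;=\; \eRiskAug(f_{\invf{\hat w}},\trdata),
\]
using $\bbE_G[\hat w^{\top}\rho_G X_i]=(\invf{\hat w})^{\top}X_i$ and the fact that $\invf{\hat w}$ is already invariant. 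Your detour through the dual-action invariance $g(\rho_h^*w)=g(w)$ and a second Haar variable $H$ is correct but unnecessary: the $G$-integral already present in the definition does the work in one line, and the contradiction with minimality is immediate.

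On the spanning issue you flag: the paper invokes exactly the hypothesis that $\calX$ spans $V$ (not that the sample $\{X_i\}$ does) and glosses over the same difficulty you identify---strictness of Jensen at the empirical level requires the orbits of the \emph{sample points} to detect non-invariance of $\hat w$. So your concern is legitimate and the paper's argument carries the same gap; your suggestion to pass to the population risk is one clean resolution, while another is to observe that the empirical minimizer is only determined on the span of $\{\rho_g X_i : g\in\grp,\ i\leq n\}$, and on that subspace the conclusion already holds.
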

Under suitable step-size conditions (e.g., the Robbins--Munro conditions) SGD will converge to an invariant set of weights. Thus, to learn a predictor that exhibits the desired invariance on the entire dataset, it is sufficient to train with SGD on augmented data with a convex loss.

Non-convex objectives with non-linear models do not yield similar results. As most settings for which we would use deep learning are both non-convex and non-linear, this suggests that although DA may appear to promote invariance, it may fail to learn networks that are truly invariant. 

The example depicted in \cref{fig:ood} demonstrates such a failure: the learned function appears to capture the target invariance on the training data, but, having not learned the appropriate symmetry in weight space, fails to generalize to novel data and displays high variance over orbits in evaluation. We train fully connected neural networks using DA on one of two related datasets: MNIST and fashionMNIST ($28 \times 28$ pixel black and white images of handwritten digits and clothing categories respectively), each augmented by rotations of multiples of 90 degrees. We then evaluate the variance of the outputs over orbits (rotations by 90, 180, and 270 degrees) in the test set. Finally, we evaluate the two networks on orbits in the complementary dataset.

We observe that the networks attain low variance over orbits of data drawn from the same distribution as the training data. The performance of the networks on out-of-distribution data is more interesting. The MNIST network has increasingly \textit{higher} variance of its predictions on the rotations of fashionMNIST as it reduces its prediction variance over orbits of MNIST. We also note that the variance between random seeds in the variances over orbits was significantly higher on the out-of-distribution data. We omit data for FA because averaging over each of the four rotations of the input trivially yields a variance of zero over each orbit.
\begin{figure}[t]
    \centering
    \vspace{-0.123in}
    \includegraphics[width=0.48\textwidth]{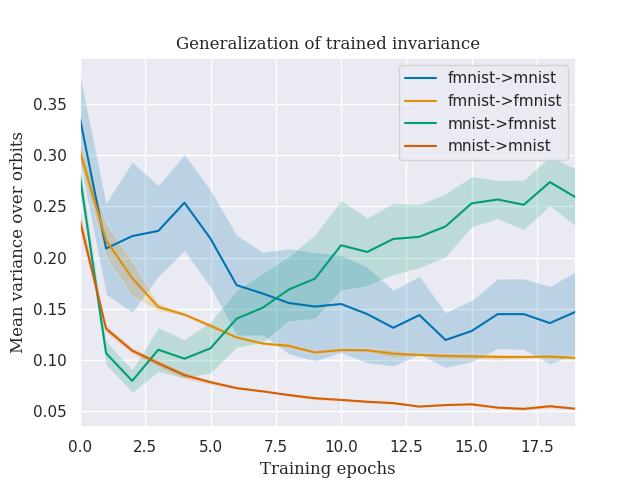}
    \vspace{-1.5\baselineskip}
    \caption{Variance of predictions w.r.t. rotations of input over the course of training. Labels indicate training $\rightarrow$ evaluation set.}
    \label{fig:ood}
    \vspace{-0.175in}
\end{figure}

\subsection{PAC-Bayes Generalization Bounds}

The PAC-Bayes bound in \cref{thm:catoni:bound} holds with binary classification loss. Exact DA violates the assumptions because with the same loss, $\bbE_{G\sim\haar}[\loss(f(GX),Y)] \in [0,1]$. Monte Carlo approximations of $\bbE_{G\sim\haar}[\loss(f(GX),Y)]$ also violate the assumptions of \cref{thm:catoni:bound} because the augmented data set is not i.i.d. 
We address both issues with the following {PAC-Bayes bound for DA}.

\begin{theorem}
  \label{thm:pac:bayes:da}
  Assume that $\dgd$ is \ginv. Then \cref{thm:catoni:bound} holds with either of $\eRiskAug(Q,\trdata)$ as in \eqref{eq:aug:risk} or $\eRiskAugMC(Q,\trdata)$ as in \eqref{eq:aug:risk:mc} substituted for $\eRisk(Q,\trdata)$.
\end{theorem}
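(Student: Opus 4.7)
My plan is to observe that both augmented empirical risks can be rewritten as ordinary empirical means of an appropriately constructed bounded loss evaluated on an i.i.d.\ sample, at which point \cref{thm:catoni:bound} applies verbatim. The only non-trivial ingredient is that $\grp$-invariance of $\dgd$ forces the population expectation of the new loss to agree with $\risk(f)$, so that the bound is a statement about the original risk and not about some orbit-averaged surrogate.

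For the exact case, I would introduce the symmetrized loss $\loss^{\circ}(f;(X,Y)) := \bbE_{G\sim\haar}[\loss(f(GX),Y)]$. Since $\loss$ is $0$-$1$ loss and $\haar$ is a probability measure, $\loss^{\circ}$ takes values in $[0,1]$. The key identity
\[
  \bbE_{(X,Y)\sim\dgd}[\loss^{\circ}(f;(X,Y))] = \bbE_{G\sim\haar}\bbE_{(X,Y)\sim\dgd}[\loss(f(GX),Y)] = \risk(f)
\]
follows from Fubini (justified by compactness of $\grp$) and from the $\grp$-invariance $(GX,Y) \equdist (X,Y)$. By construction $\eRiskAug(f,\trdata)$ is exactly the empirical mean of $\loss^{\circ}(f;\,\cdot\,)$ over the i.i.d.\ sample $\trdata$, so applying \cref{thm:catoni:bound} with $\loss$ replaced by $\loss^{\circ}$ yields the stated inequality with $\eRiskAug(Q,\trdata)$ on the right-hand side.

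For the Monte Carlo case I would enlarge the sample space so that the random augmentations themselves become part of each data point. Let $Z_i := (X_i, Y_i, G_{i1}, \dots, G_{im})$ with $G_{ij}\sim\haar$ drawn independently of each other and of $\trdata$; then $Z_1,\dots,Z_n$ are i.i.d.\ under the product measure $\dgd \times \haar^m$. Define $\loss^{\mathrm{MC}}(f;Z) := \tfrac{1}{m}\sum_{j=1}^m \loss(f(G_j X), Y)$, which is again $[0,1]$-valued, and whose expectation under $\dgd\times\haar^m$ equals $\risk(f)$ by the same invariance argument used above. Since $\eRiskAugMC(f,\trdata)$ is precisely the empirical mean of $\loss^{\mathrm{MC}}$ on the i.i.d.\ sample $(Z_i)_{i\le n}$, \cref{thm:catoni:bound} applies to this setup and delivers the claim with $\eRiskAugMC(Q,\trdata)$.

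I do not anticipate any genuinely hard step here; the whole argument is a change-of-loss and change-of-sample-space trick, enabled by $\grp$-invariance of $\dgd$. The only points requiring care are (i) measurability of $\loss^{\circ}$, which is immediate from compactness of $\grp$ and Fubini, (ii) ensuring that the prior $P$ and posterior $Q$ in the applied Catoni bound are distributions on the same function class $F$ as in the statement (they are), and (iii) noting that the KL term $\KL{Q}{P}$ is unchanged because neither $P$ nor $Q$ depends on the choice of loss or on the auxiliary variables $G_{ij}$. Without $\grp$-invariance of $\dgd$ the population expectation of $\loss^{\circ}$ or $\loss^{\mathrm{MC}}$ would differ from $\risk(f)$ and the bound obtained would be about a different quantity; this identifies $\grp$-invariance as the essential hypothesis.
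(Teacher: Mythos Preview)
Your proposal is correct and takes a genuinely different route from the paper. The paper invokes the general framework of \citet{leveretal2013tighterPACbayes} (stated in the appendix as \cref{lem:lever:bound}) and bounds the Laplace-transform term $\mathcal{L}_P$ directly: for each of $\eRiskAug$ and $\eRiskAugMC$, Jensen's inequality applied to $z\mapsto e^{-Cz}$ together with $\grp$-invariance of $\dgd$ shows $\mathcal{L}_P \le 1$, whence Catoni's inequality follows by algebra. Your approach instead reduces to the vanilla Catoni bound by a change of loss (exact case) and a change of sample space (Monte Carlo case), which is more elementary and avoids the Lever et al.\ machinery entirely.

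One point deserves explicit attention. The paper states \cref{thm:catoni:bound} specifically for $0$-$1$ loss and in fact flags the exact-DA case as ``violating the assumptions'' precisely because $\loss^{\circ}$ is only $[0,1]$-valued. Your argument therefore relies on the (true, standard, but not stated in the paper) fact that \eqref{eq:catoni:bound} holds for any $[0,1]$-valued loss; this follows because for any $Z\in[0,1]$ with mean $p$ one has $\bbE[e^{-CZ}]\le 1-p(1-e^{-C})$ by convexity of $e^{-Cz}$, which is the only place the $0$-$1$ structure enters Catoni's proof. If you make this step explicit, your argument is complete and arguably cleaner. The paper's route, by contrast, yields the extra dividend that $\mathcal{L}_P$ may be strictly less than $1$, which is what underlies their remark (\cref{appx:tighter:pacbayes:da}) that tighter bounds for DA are in principle available.
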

See \cref{appx:proofs} for the proof, which uses a general formula of \citet{leveretal2013tighterPACbayes} and the invariance structure of $\dgd$. 
The bound \eqref{eq:catoni:bound} is looser than what is theoretically possible for DA.  However, tighter bounds with an analytic form (see \cref{appx:tighter:pacbayes:da}) are computationally intractable. 

\section{Feature Averaging Can Do More} \label{sec:feature:averaging}

In this section we establish that FA should be preferred over DA in most situations. When the loss is convex, generalization error decreases both in expectation and per-dataset, and there is a further variance-reduction in risk estimates. More importantly, symmetrization compresses the model, resulting in a \kword{symmetrization gap} in the PAC-Bayes bound. 

For a group $\grp$ that acts on $\calX$, symmetrization of any function $f : \calX \to \bbR$ can be performed by averaging over $\grp$, as in \eqref{eq:symmetrization:def}. 
Fix a function class $\fclass$, and let $\invf{\fclass}$ denote the class of $\grp$-invariant functions obtained by symmetrizing the functions belonging to $\fclass$. Clearly, $\symm_{\grp}$ is surjective, but it may not be injective. The inverse image of $\invf{f}$, $\symm_{\grp}^{-1} \invf{f}$, yields the set of functions in $\fclass$ whose $\grp$-symmetrization yields $\invf{f}$. Function symmetrization is naturally extended to probability measures on function classes: 
for any probability measure $P$ on $\fclass$, the induced probability measure on $\invf{\fclass}$ is the image of $P$ under $\symm_{\grp}$, $\invf{P} = P \circ \symm_{\grp}^{-1}$.

\subsection{Further Variance Reduction with Convex Loss}

With convex loss, Jensen's inequality can be applied to the augmented risk to compare DA and FA risk estimates. The proof of the following proposition is given in \cref{appx:proof:prop:empirical:risk:order}.

\begin{proposition} \label{prop:empirical:risk:order}
  Let $\loss : \bbR \times \bbR \to \bbR_+$ be a loss function that is convex in its first argument. 
  Then for any $f : \calX \to \calY$, 
  \begin{align*} 
    \eRisk(\invf{f},\trdata) = \eRiskAug(\invf{f},\trdata)  \leq \eRiskAug(f,\trdata) \;,
  \end{align*}
  and therefore analogous inequalities hold for $\eRisk(\invf{Q},\trdata)$, $\risk(f)$, and $\risk(Q)$. 
  Furthermore, if $\loss(f(\argdot),\argdot) \in L_2(\dgd)$ (i.e., has finite second moment), 
  \begin{align*}
  	\Var_{\trdata\sim\dgd^n}\big[ \eRisk(\invf{f},\trdata)  \big] \leq \Var_{\trdata\sim\dgd^n}\big[ \eRiskAug(f,\trdata)  \big] \;.
  \end{align*}
\end{proposition}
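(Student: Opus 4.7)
The plan is to reduce the pointwise statements to Jensen's inequality applied to the convex first argument of $\loss$, using two structural facts: (i) the symmetrized function $\invf{f}$ is itself $\grp$-invariant, and (ii) $\dgd$ is $\grp$-invariant, so the iterated-expectation identity \eqref{eq:iterated:expectations} lets augmented quantities stand in for baseline quantities.

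First I verify that $\invf{f}$ is $\grp$-invariant: for any $g\in\grp$, $\invf{f}(gx) = \bbE_{G\sim\haar}[f(Ggx)] = \bbE_{G\sim\haar}[f(Gx)] = \invf{f}(x)$ by left-invariance of Haar measure. Consequently, for each $(X_i,Y_i)$ the integrand $\loss(\invf{f}(GX_i),Y_i)$ is constant in $G$, so $\bbE_G[\loss(\invf{f}(GX_i),Y_i)] = \loss(\invf{f}(X_i),Y_i)$ and hence $\eRiskAug(\invf{f},\trdata) = \eRisk(\invf{f},\trdata)$. To obtain the inequality I rewrite $\loss(\invf{f}(X_i),Y_i) = \loss(\bbE_G[f(GX_i)], Y_i)$ and apply Jensen's inequality in the first coordinate of $\loss(\cdot, Y_i)$, yielding $\loss(\invf{f}(X_i),Y_i) \leq \bbE_G[\loss(f(GX_i),Y_i)]$; averaging over $i$ gives $\eRisk(\invf{f},\trdata) \leq \eRiskAug(f,\trdata)$. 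The analogous statements for $\eRisk(\invf{Q},\trdata)$ follow by integrating over $f\sim Q$ (since $\invf{Q}$ is the pushforward of $Q$ under $\symm_\grp$, so $\bbE_{f\sim Q}[\loss(\invf{f}(x),y)] = \bbE_{\tilde f\sim\invf{Q}}[\loss(\tilde f(x),y)]$), and those for $\risk(f)$ and $\risk(Q)$ follow by integrating the pointwise bound over $\trdata\sim\dgd^n$ and using \eqref{eq:iterated:expectations} so that $\bbE_\trdata[\eRiskAug(f,\trdata)] = \risk(f)$.

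For the variance claim I use the orbit disintegration. Since $\invf{f}$ and $\dgd$ are both $\grp$-invariant, each summand $\loss(\invf{f}(X_i),Y_i)$ is almost surely equal to a function of $(\Orbit_i, Y_i)$ only, and the corresponding term of $\eRiskAug(f,\trdata)$ is precisely $\bbE[\loss(f(X_i),Y_i) \mid \Orbit_i, Y_i]$, the Rao--Blackwellisation with respect to the orbit-and-label $\sigma$-algebra. Independence of samples collapses both $n$-sample variances to $1/n$ times a single-sample variance over $(\Orbit, Y)$, so the claim reduces to a per-sample comparison, which I would deduce by combining this conditional-expectation representation with the Jensen contraction from the pointwise step and the law of total variance applied to the $\dgd$-disintegration \eqref{eq:iterated:expectations}. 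The main obstacle is precisely this last step: pointwise dominance $\eRisk(\invf{f},\trdata) \leq \eRiskAug(f,\trdata)$ does not in general imply variance dominance, so the argument must exploit the full conditional-expectation structure---essentially a convex-order comparison---rather than relying on the pointwise bound alone.
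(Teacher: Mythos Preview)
Your argument for the equality $\eRisk(\invf{f},\trdata)=\eRiskAug(\invf{f},\trdata)$ and the Jensen step $\eRiskAug(\invf{f},\trdata)\leq\eRiskAug(f,\trdata)$ is correct and is exactly the paper's proof.

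For the variance claim you correctly locate the obstacle: pointwise dominance of $\eRiskAug(f,\trdata)$ over $\eRisk(\invf{f},\trdata)$ does not imply variance dominance. The paper's own one-line argument commits precisely this error---it writes
\[
\Var\big[\bbE[\eRiskAug(f,\trdata)\mid\Orbit^n]\big]\;\geq\;\Var\big[\bbE[\eRisk(\invf{f},\trdata)\mid\Orbit^n]\big]
\]
``by the conditional Jensen's inequality,'' but conditional Jensen delivers only the pointwise ordering of those conditional expectations (both of which are already $\sigma(\Orbit^n,Y^n)$-measurable), not an ordering of their variances. Your proposed remedy via a convex-order or conditional-expectation comparison cannot work either, because the claim is false as stated. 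Take $n=1$, $\grp=\mathbb{Z}_2$, $Y\equiv 0$, $\loss(a,y)=|a-y|$, and two equiprobable orbits: on the first $f$ takes the values $\pm 1$ over the orbit (so $\invf{f}=0$ there), and on the second $f\equiv 10$. Then $\eRiskAug(f,\trdata)\in\{1,10\}$ with variance $81/4$, while $\eRisk(\invf{f},\trdata)\in\{0,10\}$ with variance $25>81/4$. An analogous counterexample exists for squared loss. The structural point is that $\eRisk(\invf{f},\trdata)$ is $\loss$ applied to a conditional expectation, not the conditional expectation of $\loss$; it is therefore \emph{not} a Rao--Blackwellisation of $\eRiskAug(f,\trdata)$, and no law-of-total-variance argument is available.
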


\subsection{Reduction in KL via the Symmetrization Gap} 

In modern deep learning architectures, one typically has sufficiently large capacity to drive the empirical risk arbitrarily close to zero. Although the variance-reduction of the previous section can help during training, the dominant term in the generalization bound \eqref{eq:catoni:bound} is $\KL{Q}{P}$. Indeed, much of the recent literature on obtaining nonvacuous PAC-Bayes bounds focuses on minimizing this term, subject to not overly inflating the empirical risk. 

Consider the approach of \citet{zhou2018nonvacuous}: train a deep neural network, and use a compression algorithm to obtain a lossy compression of the trained network. Countering the potential for deterioration in the empirical risk, the KL term applied to the compressed network achieves a massive reduction in entropy; the compressed network is much less complex. Those basic concepts apply more generally, as formalized by the following lemma. Although fundamental, we have been unable to find a published proof (though it would be surprising if one does not exist). We give the proof in \cref{appx:proof:lem:pushforward:KL}.

\begin{lemma} \label{lem:pushforward:KL}
  Suppose that $(E_i,\calE_i)$, $i=1,2$, are two measurable spaces, the second of which is standard, $\mu$ and $\nu$ are two probability measures on $(E_1,\calE_1)$, and $\psi : (E_1,\calE_1) \to (E_2,\calE_2) $ is a measurable map. Then
  \begin{align} \label{eq:kl:inequality:pushforward}
  	\KL{\mu\circ\psi^{-1}}{\nu\circ\psi^{-1}} \leq \KL{\mu}{\nu} \;.
  \end{align}
  Furthermore, if $\mu\ll \nu$ with density $m$, then $\mu\circ\psi^{-1} \ll \nu\circ\psi^{-1}$ with density $m_{\psi}$, and the \kword{$\psi$-gap} is
  \begin{align}
  	\Delta_{\psi}(\mu\ ||\ \nu)  : & = \KL{\mu}{\nu} - \KL{\mu\circ\psi^{-1}}{\nu\circ\psi^{-1}} \nonumber \\
  	  & = \int_{E_1} \mu(dx) \log \frac{m(x)}{(m_{\psi}\circ\psi)(x)} \;.
  \end{align}
\end{lemma}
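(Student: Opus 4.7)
The plan is to reduce the data-processing inequality to the classical conditional Jensen's inequality for the convex function $\varphi(x)=x\log x$, applied to the Radon--Nikodym derivative $m = d\mu/d\nu$ conditioned on $\sigma(\psi)$. First I would dispose of the trivial case: if $\mu \not\ll \nu$, then $\KL{\mu}{\nu}=\infty$ and there is nothing to prove, so I may assume $\mu\ll\nu$ with density $m$. Then $\mu\circ\psi^{-1} \ll \nu\circ\psi^{-1}$ is immediate, since $\nu\circ\psi^{-1}(A)=0$ forces $\nu(\psi^{-1}A)=0$ and hence $\mu(\psi^{-1}A)=0$. The Radon--Nikodym theorem on $(E_2,\calE_2)$ then produces a density $m_\psi = d(\mu\circ\psi^{-1})/d(\nu\circ\psi^{-1})$.

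The crux of the argument is identifying $m_\psi\circ\psi$ as (a version of) the $\nu$-conditional expectation $\bbE_\nu[m\mid \sigma(\psi)]$. To this end I would apply the standard change-of-variables identity $\int_{\psi^{-1}A} g\circ\psi\, d\nu = \int_A g\, d(\nu\circ\psi^{-1})$ with $g = m_\psi$: for every $A\in\calE_2$,
\begin{align*}
\int_{\psi^{-1}A} (m_\psi\circ\psi)\, d\nu &= \int_A m_\psi\, d(\nu\circ\psi^{-1}) = \mu(\psi^{-1}A) = \int_{\psi^{-1}A} m\, d\nu.
\end{align*}
Since the sets $\psi^{-1}A$ generate $\sigma(\psi)$ and $m_\psi\circ\psi$ is $\sigma(\psi)$-measurable, this is exactly the defining property of $\bbE_\nu[m\mid\sigma(\psi)]$.

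With this identification in hand, the inequality \eqref{eq:kl:inequality:pushforward} follows from conditional Jensen applied to $\varphi$: $\nu$-almost surely,
\begin{align*}
\varphi\bigl(\bbE_\nu[m\mid\sigma(\psi)]\bigr) \leq \bbE_\nu\bigl[\varphi(m)\mid \sigma(\psi)\bigr].
\end{align*}
Integrating against $\nu$ (tower property) and using the same change of variables once more gives
\begin{align*}
\KL{\mu\circ\psi^{-1}}{\nu\circ\psi^{-1}}
= \int (m_\psi\circ\psi)\log(m_\psi\circ\psi)\, d\nu
\leq \int m\log m\, d\nu = \KL{\mu}{\nu}.
\end{align*}
For the gap, I would simply rewrite both divergences as $\mu$-integrals, $\KL{\mu}{\nu} = \int \log m\, d\mu$ and $\KL{\mu\circ\psi^{-1}}{\nu\circ\psi^{-1}} = \int \log(m_\psi\circ\psi)\, d\mu$ (by change of variables under $\psi$), and subtract to obtain the claimed expression for $\Delta_\psi(\mu\,\|\,\nu)$.

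The main technical obstacle is the identification of $m_\psi\circ\psi$ with a conditional expectation and the rigorous use of the change-of-variables formula on the inner integrals, together with checking that the $\{\varphi=0\cdot\log 0\}$ and $\{m_\psi=0\}$ null-set conventions do not cause trouble (both handled by the usual convention $0\log 0 = 0$ and the fact that $\{m_\psi\circ\psi=0\}$ is $\mu$-null because $m=0$ on this set $\nu$-a.s.\ by the conditional expectation characterization). Standardness of $(E_2,\calE_2)$ is not strictly needed for the argument above, but it ensures the availability of regular conditional distributions should one prefer to phrase the proof via explicit disintegration.
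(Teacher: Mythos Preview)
Your proof is correct. It differs from the paper's argument, which proceeds via a coupling: the paper constructs the joint measure $\tilde\mu$ on $E_1\times E_2$ with $\tilde\mu_{2\mid 1}=\delta_{\psi(x_1)}$ (and similarly $\tilde\nu$), then invokes the chain rule of relative entropy in both directions. One direction gives $\KL{\tilde\mu}{\tilde\nu}=\KL{\mu}{\nu}$ because the conditionals coincide; the other yields $\KL{\tilde\mu}{\tilde\nu}=\KL{\mu\circ\psi^{-1}}{\nu\circ\psi^{-1}}+\Delta_\psi$, and non-negativity of the conditional-KL term gives the inequality and the gap simultaneously. Your route is more hands-on: you identify $m_\psi\circ\psi$ with $\bbE_\nu[m\mid\sigma(\psi)]$ and apply conditional Jensen to $\varphi(x)=x\log x$ directly. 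The two arguments are of course close cousins (non-negativity of conditional KL \emph{is} Jensen), but your version is slightly more elementary in that it never needs to build the product space or appeal to regular conditional distributions; as you observe, standardness of $(E_2,\calE_2)$ is then unnecessary for the inequality itself. The paper's chain-rule packaging, on the other hand, makes the decomposition $\KL{\mu}{\nu}=\KL{\mu\circ\psi^{-1}}{\nu\circ\psi^{-1}}+\Delta_\psi$ and the interpretation of $\Delta_\psi$ as an expected conditional KL immediate.
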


In particular, when $\psi$ is non-injective, points of $(E_1,\calE_1)$ become equivalent; $(E_2,\calE_2)$ is a compressed version, and the probability measures $\mu$ and $\nu$ are similarly compressed. 

\textbf{The symmetrization gap.} 
Applying \cref{lem:pushforward:KL} with $\psi = \symm_{\grp}$ indicates that symmetrization can reduce the KL divergence term in the PAC-Bayes bound. 

\begin{theorem} \label{lemma:KL:gen}
  Let $\mathcal{X}$ be a compact metric space and $\mathcal{Y}$ a Polish space, $\grp$ a group acting measurably on $\mathcal{X}$, and $\fclass = C(\mathcal{X},\mathcal{Y})$ the class of continuous functions $\mathcal{X}\to \mathcal{Y}$.\footnote{The result can hold for other function classes $\fclass$; the key requirement is that conditioning is properly defined in $\fclass$ and $\invf{\fclass}$.}  
  Let $Q$ and $P$ be probability measures on $\fclass$ such that $Q \ll P$ with density $q$, and $\invf{Q}\ll\invf{P}$ (density $\invf{q}$) their images under $\symm_{\grp}$ on $\invf{\fclass}$. Then 
  \begin{equation*}
    \KL{\invf{Q}}{\invf{P}} \leq \KL{Q}{P} \;.
  \end{equation*}
  Furthermore, the \kword{symmetrization gap} is 
  \begin{align} \label{eq:symm:gap}
  	\invf{\Delta}(Q \ ||\ P) = \bbE_{f\sim Q}\bigg[ \log \frac{q(f)}{\invf{q}(\symm_{\grp} f)}  \bigg] \;.
  \end{align}
\end{theorem}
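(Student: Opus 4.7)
The plan is to deduce this theorem directly from \cref{lem:pushforward:KL} by choosing the measurable map $\psi = \symm_{\grp} : \fclass \to \invf{\fclass}$, so essentially everything reduces to verifying the measure-theoretic prerequisites and then reading off the conclusions.

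First I would set up the measurable structure. Since $\mathcal{X}$ is compact metric and $\mathcal{Y}$ is Polish, $\fclass = C(\mathcal{X},\mathcal{Y})$ equipped with the uniform metric is Polish, hence standard Borel. The subclass $\invf{\fclass}$ of $\grp$-invariant continuous functions is closed in $\fclass$: if $f_n \to f$ uniformly and each $f_n$ satisfies $f_n(gx) = f_n(x)$, then $f(gx) = f(x)$ for every $g \in \grp$ and $x \in \mathcal{X}$. Being a closed subset of a Polish space, $\invf{\fclass}$ is itself Polish and therefore standard Borel, which is what \cref{lem:pushforward:KL} requires for $(E_2,\calE_2)$.

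Next I would confirm that $\symm_{\grp}$ is a well-defined measurable map $\fclass \to \invf{\fclass}$. Because $\grp$ is compact and acts measurably (hence in the relevant setups continuously) on $\mathcal{X}$, the integral $\invf{f}(x) = \bbE_{G\sim\haar}[f(Gx)]$ exists for each $x$ and defines a continuous, $\grp$-invariant function of $x$, so $\symm_{\grp}f \in \invf{\fclass}$. Measurability of $f \mapsto \symm_{\grp}f$ follows from Fubini applied to the evaluation maps $f \mapsto f(Gx)$, which are jointly measurable in $(f,G,x)$.

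With these ingredients in place, the theorem follows by direct substitution. By the definition of $\invf{P}$ and $\invf{Q}$ as the pushforwards $P\circ\symm_{\grp}^{-1}$ and $Q\circ\symm_{\grp}^{-1}$ stated earlier in the paper, \cref{lem:pushforward:KL} with $(\mu,\nu) = (Q,P)$ and $\psi = \symm_{\grp}$ yields
\begin{equation*}
  \KL{\invf{Q}}{\invf{P}} = \KL{Q\circ\symm_{\grp}^{-1}}{P\circ\symm_{\grp}^{-1}} \leq \KL{Q}{P}.
\end{equation*}
The $\psi$-gap formula of the lemma with densities $m = q$ and $m_{\psi} = \invf{q}$ becomes
\begin{equation*}
  \invf{\Delta}(Q\ ||\ P) = \int_{\fclass} Q(df)\,\log \frac{q(f)}{(\invf{q}\circ \symm_{\grp})(f)} = \bbE_{f\sim Q}\!\left[\log \frac{q(f)}{\invf{q}(\symm_{\grp}f)}\right],
\end{equation*}
which is exactly \eqref{eq:symm:gap}.

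The main obstacle is not analytical but measure-theoretic: ensuring that $\invf{\fclass}$ really is a standard Borel space and that $\symm_{\grp}$ is Borel-measurable, so that \cref{lem:pushforward:KL} applies without qualification. Once those prerequisites are established, the inequality and gap formula drop out as immediate specializations, which is why a short proof suffices; the substantive content already lies in \cref{lem:pushforward:KL}.
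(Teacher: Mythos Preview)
Your proposal is correct and follows essentially the same approach as the paper: verify that $\fclass$ and $\invf{\fclass}$ are standard Borel, that $\symm_{\grp}$ is measurable, and then apply \cref{lem:pushforward:KL} with $\psi=\symm_{\grp}$ to read off both the inequality and the gap formula. Your write-up supplies a bit more detail (closedness of $\invf{\fclass}$, measurability via Fubini) than the paper's terse version, but the argument is the same.
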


Because $\invf{Q}$ is the image of $Q$, the densities in \eqref{eq:symm:gap} satisfy
\begin{align} \label{eq:density:matching}
  \int_{\symm_{\grp}^{-1}B} q(f) P(df) = \int_{\symm_{\grp}^{-1}B} \invf{q}(\symm_{\grp}f) P(df) \;,
\end{align}
for all sets $B$ in the $\sigma$-algebra on $\invf{\fclass}$. Although this imposes a large number of constraints on $q$ and $\invf{q}\circ\symm_{\grp}$, they may differ greatly across $\fclass$. In particular, consider a $\grp$-induced equivalence class $\symm_{\grp}^{-1} \symm_{\grp} f := \{ f' \in \fclass : \symm_{\grp}f' = \symm_{\grp} f \}$. In essence, the constraints \eqref{eq:density:matching} are integrals over one or more equivalence classes. $\invf{q}\circ\symm_{\grp}$ is constant on any equivalence class, while $q$ may vary arbitrarily subject to \eqref{eq:density:matching}. 
Inspection of \eqref{eq:symm:gap} indicates that the symmetrization gap is zero if and only if $q$ is constant on each $\grp$-induced equivalence class of $F$. 
Conversely, the more $q$ varies across each equivalence class, the higher the gap. 

\textbf{Symmetrization and compression via other means.} The benefits of compression are not limited to symmetrization via averaging. \emph{Any non-injective, \ginv\ map $\psi$ will have a non-zero $\psi$-gap}. For example, each of $\sup_{g \in \grp} f(gX)$, $\inf_{g \in \grp} f(gX)$, and $\max\{0,\invf{f}(X)\}$ satisfies the criteria. 

\subsection{Practical Feature Averaging}

In practice, the expectation computed in FA may be computationally intractable. Instead, one may sample a set of $k$ transformations with which to average the function output. While this will not output the exact expectation, it still takes advantage of a simplification of the function space via \cref{lem:pushforward:KL}, by aggregating functions that have some probability of being mapped to the same approximately averaged function. To formalize the idea, let $g^k = \{g_1,g_2,\dotsc,g_k\}$ be a set of elements of $\grp$, and $G^k$ a random realization sampled i.i.d.\ from $\haar$. Let $\symm_{g^k} f(x) = k^{-1} \sum_{j\leq k}f(g_j x)$ denote the approximate symmetrization of $f$ by $g^k$. Finally, let $\invfMC{Q}_{g^k} = Q \circ \symm_{g^k}^{-1}$ denote the image of a distribution $Q$ on $\fclass$ under $\symm_{g^k}$. The following result is a consequence of the fact that \cref{lem:pushforward:KL} is true for every $g^k$, and that for $g^{k+1}=g^k\cup\{g_{k+1}\}$, $\symm_{g^{k+1}}f(x) = f(g_{k+1}x) + \frac{k}{k+1}\symm_{g^k}f(x)$.

\begin{proposition} \label{prop:kl:chain}
  Assume the conditions of \cref{lemma:KL:gen}. Let $G_{s}=G_1,G_2,\dotsc$ be a sequence of elements sampled i.i.d.\ from $\haar$. Then with probability one over $G_s$, 
  \begin{align*}
  	 \KL{Q}{P} & \geq \KL{\invfMC{Q}_{G^1}}{\invfMC{P}_{G^1}} \geq \dotsb \\ 
  	   & \geq \KL{\invfMC{Q}_{G^k}}{\invfMC{P}_{G^k}} \geq \dotsb \\
  	   & \geq \KL{\invf{Q}}{\invf{P}} \;.
  \end{align*}
\end{proposition}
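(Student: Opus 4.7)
The plan is to split the chain into three types of links and apply \cref{lem:pushforward:KL} with a suitably chosen map for each. I would fix an arbitrary realization of $G_s$ and prove each inequality pointwise; the almost-sure statement in the proposition then follows because the argument holds for every realization of the i.i.d.\ sequence.

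The top link, $\KL{Q}{P} \geq \KL{\invfMC{Q}_{G^1}}{\invfMC{P}_{G^1}}$, is immediate from \cref{lem:pushforward:KL} applied with $\psi = \symm_{G^1}$, i.e.\ the measurable map $f \mapsto f(G_1\argdot)$ on $F$. For the bottom link, $\KL{\invfMC{Q}_{G^k}}{\invfMC{P}_{G^k}} \geq \KL{\invf{Q}}{\invf{P}}$, I would use the identity $\symm_{\grp} \circ \symm_{g^k} = \symm_{\grp}$, a consequence of the left-invariance of the Haar measure: the change of variables $G' = Gg_j$ gives $\bbE_{G\sim\haar}[f(Gg_j\argdot)] = \symm_{\grp}f$ for every $j$, and averaging over $j$ leaves $\symm_{\grp}f$ unchanged. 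Consequently $\invf{Q} = \invfMC{Q}_{g^k} \circ \symm_{\grp}^{-1}$ and likewise for $P$, so applying \cref{lem:pushforward:KL} to the pair $(\invfMC{Q}_{g^k}, \invfMC{P}_{g^k})$ with $\psi = \symm_{\grp}$ yields the required inequality for every $g^k$.

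The intermediate links, $\KL{\invfMC{Q}_{g^k}}{\invfMC{P}_{g^k}} \geq \KL{\invfMC{Q}_{g^{k+1}}}{\invfMC{P}_{g^{k+1}}}$, are the main obstacle. The recursion $\symm_{g^{k+1}}f = \tfrac{1}{k+1}f(g_{k+1}\argdot) + \tfrac{k}{k+1}\symm_{g^k}f$ hinted at in the text does not express $\symm_{g^{k+1}}$ as a pushforward through $\symm_{g^k}$ alone, since $f(g_{k+1}\argdot)$ is not determined by $\symm_{g^k}f$. My plan is instead to use the leave-one-out identity
\begin{equation*}
  \symm_{g^{k+1}} \;=\; \frac{1}{k+1}\sum_{i=1}^{k+1} \symm_{g^{k+1}\setminus\{g_i\}} \;,
\end{equation*}
obtained by double counting, which presents $\symm_{g^{k+1}}(f)$ as a deterministic convex combination of $(k+1)$ size-$k$ symmetrizations; under the exchangeability of $G_1,\dotsc,G_{k+1}$, each leave-one-out symmetrization has the same marginal distribution as $\symm_{g^k}$. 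The delicate issue is that $\invfMC{Q}_{g^{k+1}}$ is the pushforward of $Q$ by the averaging map, not a mixture of the leave-one-out pushforwards, so joint convexity of KL does not apply directly. I would attempt to close the gap by augmenting the sample space with a uniform index $I$, computing KL on the joint $(Y,I)$ with $Y = \symm_{g^{k+1}\setminus\{G_I\}}(f)$, and then connecting this to $\invfMC{Q}_{g^{k+1}}$ via Jensen's inequality on the Donsker--Varadhan variational form of KL. I expect this to be the most technical step, and as a fallback one can try to prove the inequality first in expectation over $G_{k+1}$ and then upgrade to an a.s.\ statement by a martingale/exchangeability argument on the filtration generated by $G_1,G_2,\dotsc$.
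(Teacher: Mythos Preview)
Your top and bottom links are correct and match what the paper intends via \cref{lem:pushforward:KL}. For the intermediate links the paper gives no detailed argument; it offers only the one-line hint preceding the proposition, that the result follows from \cref{lem:pushforward:KL} together with the recursion $\symm_{g^{k+1}}f = \tfrac{1}{k+1}f(g_{k+1}\argdot) + \tfrac{k}{k+1}\symm_{g^k}f$. You correctly observe that this recursion does not express $\symm_{g^{k+1}}f$ as a function of $\symm_{g^k}f$ alone, so \cref{lem:pushforward:KL} cannot be applied to the pair $(\invfMC{Q}_{g^k},\invfMC{P}_{g^k})$ to produce the $(k{+}1)$-step. Your instinct here is right: the paper's sketch does not establish the intermediate inequalities.

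Your proposed repairs (leave-one-out averaging, index augmentation, Donsker--Varadhan, a martingale upgrade) will not close the gap, because the intermediate monotonicity is false on a set of positive probability. Take $\calX=\{1,2\}$, $\calY=\bbR$, $\grp=S_2$ acting by swapping the two points, and identify $f$ with $(f(1),f(2))\in\bbR^2$. On the event $\{G_1=e,\ G_2=\sigma,\ G_3=e\}$, which has $\haar^{\otimes 3}$-probability $1/8$, the map $\symm_{G^2}$ is the rank-one projection $(a,b)\mapsto(\tfrac{a+b}{2},\tfrac{a+b}{2})$, while $\symm_{G^3}$ has matrix $\tfrac{1}{3}\bigl(\begin{smallmatrix}2&1\\1&2\end{smallmatrix}\bigr)$, which is invertible. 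Since KL is invariant under measurable bijections, $\KL{\invfMC{Q}_{G^3}}{\invfMC{P}_{G^3}}=\KL{Q}{P}$, whereas $\KL{\invfMC{Q}_{G^2}}{\invfMC{P}_{G^2}}$ is strictly smaller whenever the projection loses information; for instance $Q=N((1,0),I_2)$ and $P=N(0,I_2)$ give $\tfrac{1}{2}$ versus $\tfrac{1}{4}$. Hence $\KL{\invfMC{Q}_{G^2}}{\invfMC{P}_{G^2}}<\KL{\invfMC{Q}_{G^3}}{\invfMC{P}_{G^3}}$ with positive probability, and the almost-sure chain claimed in the proposition cannot hold as stated. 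What your top and bottom arguments do establish, for every realization and every $k$, is the sandwich $\KL{Q}{P}\geq\KL{\invfMC{Q}_{G^k}}{\invfMC{P}_{G^k}}\geq\KL{\invf{Q}}{\invf{P}}$; that is the part of the statement that survives.
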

As with practical DA, the interplay between SGD and approximate symmetrization remains an open question. However, \cref{prop:kl:chain} makes it clear that at test time, FA---even approximate---is favored. 

\textbf{Computing the symmetrized KL.} One drawback to the generic applicability of FA is the difficulty of computing $\KL{\invf{Q}}{\invf{P}}$ within current approaches to specifying $Q$ and $P$ on neural networks. Specifically, in the approach pioneered by \citet{langford2002pacbayes} and refined by \citet{dziugaite2017nonvacuous} is (roughly) as follows: $P$ is a mean zero uncorrelated multivariate Gaussian distribution on the weights of the network; $Q$ is an uncorrelated multivariate Gaussian distribution, with mean equal to the trained weights and variances optimized to minimize the PAC-Bayes bound. Given that the the network represents a non-linear function, $\KL{\invf{Q}}{\invf{P}}$ cannot be computed in closed form. Whether there is a feasible alternative method to specifying $P$ and $Q$ that would allow for computation of $\KL{\invf{Q}}{\invf{P}}$ remains an open question. 
We give an example of when it can be computed with a linear model in \cref{sec:examples} with a linear model. 

Despite this drawback, the symmetrization gap in the theoretical bounds appears to have real effects on generalization, as shown by the experiments in \cref{sec:experiments}. 

\subsection{PAC-Bayes Bounds}

As discussed in \cref{sec:da:ood}, DA symmetrizes the loss function, which does not guarantee that the learned function $f^*$ will be $\grp$-invariant. Moreover, the generalization error of the learned predictor $f^*$ will be estimated on untransformed test data, precluding randomized prediction distributions $Q$ based on $f^*$ from concentrating on $\invf{\fclass}$. That is, the PAC-Bayes bound for DA does not benefit directly from the symmetrization gap. 

Conversely, FA takes advantage of the symmetrization gap. When the empirical risk $\eRisk(Q,\trdata)$ is close to zero, which will be the case for a trained neural network, the symmetrization gap is the primary contributor to reductions in the PAC-Bayes generalization error bound. When the bound is nonvacuous, \emph{the symmetrization gap is a measurement of the benefit of invariance.}

We formalize these statements in an ordering of the PAC-Bayes generalization upper bounds. Let $B_0$ be the upper bound on the right-hand side of \eqref{eq:catoni:bound}, with $B_{\text{\rm DA}}$ and $B_{\text{\rm FA}}$ corresponding to the upper bounds for DA (using the augmented empirical risk $\eRiskAug(Q,\trdata)$) and FA (using $\KL{\invf{Q}}{\invf{P}}$), respectively. Finally, let $B_{\text{\rm DA \#}}$ denote the computationally intractable bound for DA given in \cref{appx:tighter:pacbayes:da}. 

\begin{theorem} \label{thm:bound:order}
  Assume the conditions of \cref{thm:catoni:bound}, and also that $\dgd$ is \ginv. Then $B_{\text{\rm FA}} \leq B_{\text{\rm DA \#}} \leq B_{\text{\rm DA}} = B_{0}$.
\end{theorem}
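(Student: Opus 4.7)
My strategy is to treat the Catoni upper bound as a function $B(r, k)$ of the empirical risk $r$ and the complexity term $k$; it is immediate from \eqref{eq:catoni:bound} that $B$ is strictly increasing in each argument. The four-term chain $B_{\text{FA}} \leq B_{\text{DA\#}} \leq B_{\text{DA}} = B_0$ then reduces to componentwise comparisons of the $(r, k)$ pairs that instantiate each bound.

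The equality $B_{\text{DA}} = B_0$ is structural: by \cref{thm:pac:bayes:da}, Catoni's formula holds verbatim with $\eRiskAug(Q,\trdata)$ in place of $\eRisk(Q,\trdata)$, and the complexity term $\KL{Q}{P}$ is identical in both. So $B_0$ and $B_{\text{DA}}$ are the same Catoni instance $B(\argdot, \KL{Q}{P})$, and equally valid upper bounds on $\risk(Q)$. The inequality $B_{\text{DA\#}} \leq B_{\text{DA}}$ is by construction of $B_{\text{DA\#}}$ in \cref{appx:tighter:pacbayes:da}: it exploits additional structure of the augmented loss (conditional-independence / variance reduction from symmetrization) to obtain a pointwise tighter exponent while retaining the same KL term.

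The substantive step is $B_{\text{FA}} \leq B_{\text{DA\#}}$, which I would establish by showing that both arguments of $B_{\text{FA}}$ are no larger than those of $B_{\text{DA\#}}$ and then invoking monotonicity of $B$. For the complexity term, \cref{lemma:KL:gen} directly gives $\KL{\invf{Q}}{\invf{P}} \leq \KL{Q}{P}$ via the symmetrization gap. For the empirical risk, \cref{prop:empirical:risk:order} yields $\eRisk(\invf{f},\trdata) = \eRiskAug(\invf{f},\trdata) \leq \eRiskAug(f,\trdata)$ pointwise in $f$, and averaging over $f \sim Q$ lifts this to $\eRisk(\invf{Q},\trdata) \leq \eRiskAug(Q,\trdata)$. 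Feeding these two inequalities into the monotone $B$ produces the claim.

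The main obstacle is reconciling the convex-loss hypothesis of \cref{prop:empirical:risk:order} with the 0-1 loss of \cref{thm:catoni:bound}. I would bridge this through the randomized-predictor view: write the empirical risk of a stochastic classifier as $\bbE_{f \sim Q}[\loss(f(X), Y)]$, extend $\loss$ to $[0,1]$-valued predictions by its natural convex relaxation (so the required Jensen step becomes $\loss(\invf{f}(x), y) \leq \bbE_G[\loss(f(Gx), y)]$), and verify that this extension leaves Catoni's bound intact. The rest is bookkeeping: chaining monotonicity of $B$ with the two symmetrization inequalities delivers the stated ordering.
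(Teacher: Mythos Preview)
The paper does not give an explicit proof of \cref{thm:bound:order}; it is presented as a summary of the relationships established by \cref{prop:da:variance:reduction}, \cref{thm:pac:bayes:da}, \cref{prop:empirical:risk:order}, \cref{lemma:KL:gen}, and \cref{appx:tighter:pacbayes:da}. Your assembly of these pieces via the monotone map $B(r,k)$ is the intended route. Two places in your proposal need more care, however, and both reflect genuine ambiguities in the theorem statement itself.

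First, the equality $B_{\text{DA}} = B_0$. You call both ``the same Catoni instance $B(\argdot,\KL{Q}{P})$'', but they plug in different empirical risks: $\eRisk(Q,\trdata)$ versus $\eRiskAug(Q,\trdata)$. These are distinct random variables for a fixed $\trdata$ (they merely share the same mean by \cref{prop:da:variance:reduction}), so the two bounds are not numerically equal as functions of the sample. The paper's ``$=$'' should be read as equality of bound \emph{form}---both are instantiations of \eqref{eq:catoni:bound} with the same $\KL{Q}{P}$ and the same $\calL_P=1$---not as a pointwise identity. Your write-up should make this explicit rather than elide it.

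Second, and more substantively, your componentwise argument for $B_{\text{FA}} \leq B_{\text{DA\#}}$ treats the bound as a function of two inputs $(r,k)$, but $B_{\text{DA\#}}$ carries a third ingredient: the $\log\calL_P \leq 0$ correction from \cref{appx:tighter:pacbayes:da}. If $B_{\text{FA}}$ is the plain Catoni bound on $\invf{\fclass}$ (with $\calL=1$), then establishing $\KL{\invf{Q}}{\invf{P}} \leq \KL{Q}{P}$ and $\eRisk(\invf{Q},\trdata) \leq \eRiskAug(Q,\trdata)$ does \emph{not} suffice, because you must also beat the negative $\log\calL_P$ term. The cleanest repair is the one your convex-relaxation remark already points toward: under the linear extension $\loss(p,y) = |p - y|$ on $[0,1]$ (which coincides with 0-1 loss on $\{0,1\}$), one has $\loss(\invf{f}(x),y) = \bbE_{G\sim\haar}[\loss(f(Gx),y)]$ \emph{exactly}, so $\eRisk(\invf{Q},\trdata) = \eRiskAug(Q,\trdata)$ and, by change of variables under $\symm_{\grp}$, the Laplace term satisfies $\calL_{\invf{P}} = \calL_P$. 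With that in hand, the tight FA bound and $B_{\text{DA\#}}$ differ only in their KL term, and \cref{lemma:KL:gen} finishes the comparison. Without making this step explicit, your two-argument monotonicity argument does not reach $B_{\text{FA}} \leq B_{\text{DA\#}}$.
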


Of course, without corresponding lower bounds, this does not imply a strict ordering on generalization error. However, the upper bounds are informative, they should carry some information about relative performance. We demonstrate this empirically in \cref{sec:experiments}. 

\section{Example: Permutation-Invariant Linear Regression} \label{sec:examples}

The following example is a simple toy model, but it adheres to what may be done in practice. Specifically, consider linear regression $f_w(X) = w^{\top} X$, $w \in \bbR^k$, with the PAC-Bayes procedure of \citet{dziugaite2017nonvacuous}: estimate $\hat{w}$ to optimize some loss function; define $Q$ as a $k$-dimensional normal distribution with mean $\hat{w}$, covariance $S = s^2 I_k$, and $P$ likewise with mean $\mu$, covariance $\Sigma = \sigma^2 I_k$. Then
\begin{align*}
  \KL{Q}{P} =  \frac{k}{2} \bigg(\frac{s^2}{\sigma^2}-1 + \ln\frac{\sigma^2}{s^2}\bigg) + \frac{||\mu - w||^2_2}{2\sigma^2} \;.
\end{align*}
Alternatively, consider the same model averaged over all permutations of the inputs. Then for any $w$ in the original model, there is the constant vector $\invf{w}1_k = \frac{1}{d!}\sum_{\pi\in\mathcal{S}_d} \pi w = k^{-1} 1_k 1_k^{\top} w$. The image of the prior therefore is equivalent to a 1-dimensional normal distribution with mean $\invf{\mu} = k^{-1} 1_k 1_k^{\top} \mu$ and variance $k^{-1}\sigma^2$, and similarly for the image of the posterior. Therefore,
\begin{align*}
  \KL{\invf{Q}}{\invf{P}} = \frac{1}{2} \bigg(\frac{s^2}{\sigma^2}-1 + \ln\frac{\sigma^2}{s^2}\bigg) + \frac{k(\invf{\mu} - \invf{w})^2}{2\sigma^2} \;.
\end{align*}
In practice, the KL (and various measures of model complexity) is dominated by the terms involving $||w||^2_2$. Focusing on the difference in those terms, by the Cauchy--Schwarz inequality the symmetrization gap is  
\begin{align*}
  \invf{\Delta}(Q\ ||\ P) \approx \frac{1}{2\sigma^2} \sum_{j=1}^k \big((\mu_j - w_j)^2 - (\invf{\mu} - \invf{w})^2 \big) \geq 0 \;.
\end{align*}

We give a further example based on Boolean functions in \cref{appx:example:boolean}.

\section{Experiments} \label{sec:experiments}
\begin{figure*}[bt]
    \begin{center}
    \includegraphics[width=0.3\textwidth]{./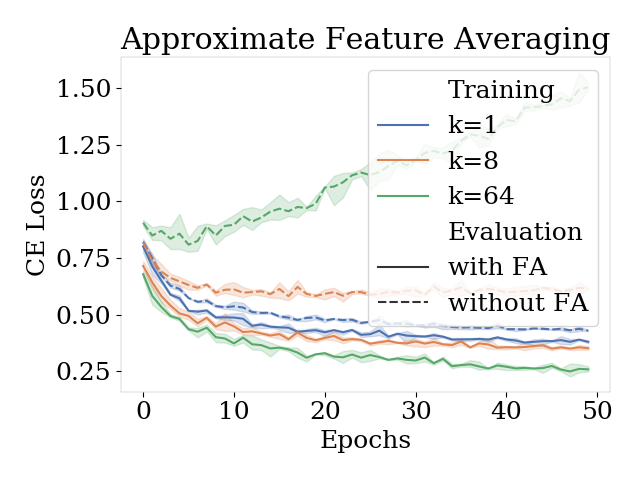}
    \includegraphics[width=0.3\textwidth]{./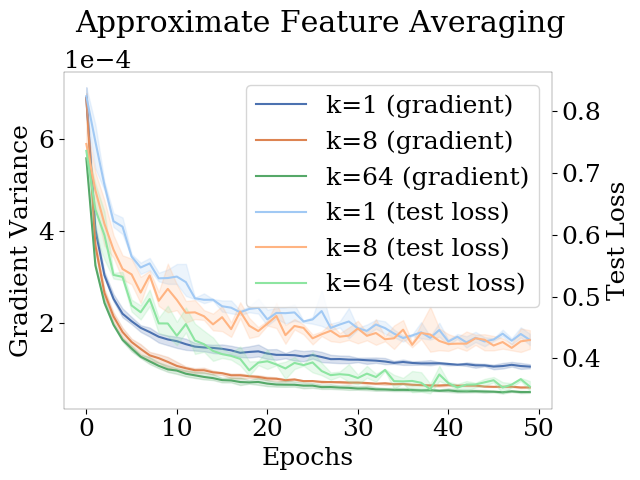}
    \includegraphics[width=0.3\textwidth]{./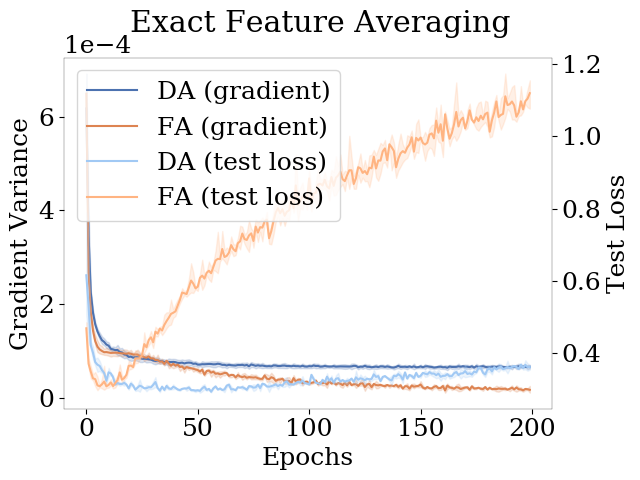}
    \vspace{-\baselineskip}
    \caption{Measurements over the course of training a convolutional neural network using different data augmentation and feature averaging approaches. Left: models are trained with approximate feature averaging using $k$ sampled rotations in the range $\{1, \dots, 360\}$, and then evaluated with and without that averaging scheme.  Middle: per-epoch gradient variance and test loss in the same setting. Right: same dataset and architecture as before, but now augmentation set is composted or rotation by 90 degrees, so feature averaging is exact..}
    \label{fig:acc}
    \end{center}
    \vspace{-0.2in}
\end{figure*}
We provide two examples to illustrate the theoretical results from sections \cref{sec:data:augmentation,sec:feature:averaging}.

\begin{table}[b]
	\vspace{-0.3in}
	\caption{Generalization performance for a permutation-invariant point cloud classification task (see text for details).}
    \label{table:decomp}
    \vspace{-\baselineskip}
    \begin{center}
    \resizebox{0.5\textwidth}{!}{
    \begin{tabular}{ccccc}
    \toprule
         \textbf{Network} & \textbf{Train} & \textbf{Test} &  \textbf{KL} &  \textbf{PAC-Bayes} \\
          & \textbf{Error} & \textbf{Error} & \textbf{Divergence} & \textbf{Bound} \\
         \midrule
         Fully connected & 0.002 & 0.65 & 24957 & 1.75 \\
         Partial-Pointnet & 0.172 & 0.248 & 1992 & 0.67\\
         Pointnet &  0.24 & 0.245 & 944 & 0.533 \\
         \bottomrule
    \end{tabular}
    }
    \end{center}
\end{table}

\subsection{Training Behavior of DA and FA}

In Section 3, we showed that feature averaging reduces variance in both function outputs and gradient steps when compared to data augmentation. We provide a demonstration of how this reduction in variance may play out in practice to ground the previous theoretical analysis and to give the reader a sense of the complexity of analyzing the interplay between feature averaging on gradient descent dynamics. For our evaluation, we train a series of convolutional neural networks on an augmentation of the FashionMNIST dataset. The class of an article of clothing is invariant to rotations: put simply, there is no way of rotating a shoe such that it can be mistaken for a t-shirt. We therefore consider two different augmentations of the dataset by rotations to construct invariant training distributions. 

In the first, we augment the dataset by the 4-element group $\grp$ of 90 degree rotations so that the data-generating distribution $\mathcal{P}$ is invariant to the action of $\grp$, and train a simple convolutional neural network (CNN) once with feature averaging, and once without feature averaging. In this setting, the average over $\grp$ can be computed exactly. Our findings agree with the results of \cref{sec:feature:averaging}: exact FA leads to a reduction in gradient variance, and also to lower training loss. However, the model trained with FA demonstrates overfitting, suggesting that the reduction in variance obtained by exact FA may not always be desirable during training. 

We next consider an additional augmentation of FashionMNIST via the group $\grp$ of rotations in the set $\{1^\circ, \dots, 360^\circ\}$. In this setting, we perform approximate feature averaging with $k$ samples, where $k << |\grp|$. We observe that the model trained with FA becomes increasingly dependent on feature averaging to obtain a low loss: the loss of each individual function computed by the network increases during training, and it is only when averaging over orbits that the network attains the lowest loss. In other words, the trajectory of the models trained with approximate feature averaging converge to regions of parameter space that don't correspond to functions that attain low loss when evaluated without feature averaging, and so may be quite different from the parameters learned by data augmentation. 

\subsection{Generalization in Neural Networks}

We next provide a demonstration of the effect of invariance on PAC-Bayesian bounds for neural networks. We use the ModelNet10 dataset, which consists of LiDAR point cloud data for 10 classes of household objects. This dataset exhibits permutation invariance: the LiDAR reading is stored as a sequence of points defined by $\{x,y,z\}$ coordinates, and the order in which the points are listed is irrelevant to the class.  We consider three different architectures: a PointNet-like architecture \citep{qi2017pointnet}, which is invariant to permutations; a partitioned version of the PointNet architecture which is invariant to subgroups of the permutation group (details in the Appendix); and a fully connected model where the invariant pooling operation in the PointNet is replaced by a fully-connected layer. The invariance in the network is implemented via a max-pooling layer instead of an averaging layer and so is not a direct application of feature averaging; however, the results of \cref{eq:kl:inequality:pushforward} would apply, were we able to compute the PAC-Bayesian bound for the model exactly.

We compute the PAC-Bayes bounds following the procedure in \citet{dziugaite2017nonvacuous}: we convert a deterministic network to a stochastic network by adding Gaussian noise to the weights, and then train this stochastic model using a differentiable surrogate loss that bounds the true PAC-Bayes bound. After this training procedure converges, we then compute the true PAC-Bayes bound. We attain an ordering consistent with the observations presented in the previous section: the invariant architecture attains the lowest bound, followed by the partially invariant architecture, and finally followed by the fully connected network. We provide a decomposition of the distinct terms in the bound in \cref{table:decomp}.

\section{Practical Implications and Conclusions} \label{sec:conclusion}

We refer back to \cref{tab:theory:summary} for a summary of our theoretical results. A few practical guidelines emerge. 

\textbf{Train with approximate data augmentation or feature averaging.} The reduction in variance of risk estimates and their gradients obtained by averaging over $\grp$ appears to be beneficial to training, though too much variance-reduction seems undesirable. Based on the experiments in \cref{sec:experiments}, we advocate for training with approximate FA or DA. 

\textbf{Use feature averaging at test/deployment time.} With a convex loss function, the generalization error $\risk(f)$ of a feature-averaged model is no worse, and possibly better, than that of its non-averaged counterpart, even when DA was used for training. Even with non-convex loss, a randomized prediction rule $Q$ has looser generalization bounds than its $\grp$-averaged counterpart $\invf{Q}$. Because of this, even a model trained with DA should generalize better when its outputs are averaged over $\grp$ at test time. The experiments in \cref{sec:experiments} demonstrate this empirically.

\section*{Acknowledgements} 
MK has received funding from the European Research Council (ERC)
under the European Union’s Horizon 2020 research and innovation programme
(grant agreement No.~834115).

\bibliography{references}
\bibliographystyle{icml2020}

\clearpage

\appendix
\onecolumn

%%%%%%% PROOFS %%%%%%%%%%

\section{Proofs} \label{appx:proofs}

\begin{proof}[Proof of \cref{theorem:symmgd}]

Let $w \in V^*$, and suppose that $w$ is not invariant under the action of $\grp$. Let $\invf{w} = \bbE_{G\sim\haar}[\rho^*_G w]$, which is \ginv\ by construction. Because $\calX$ spans $V$, $w - \invf{w} \neq 0$ implies that $w \neq \invf{w}$. 

Consider the minimizer
\begin{align*}
  \hat{w} = \argmin_{w \in V^*} \eRiskAug(f_w,\trdata) = \argmin_{w \in V^*} \frac{1}{n}\sum_{i=1}^n \bbE_{G\sim\haar}[\loss(w^{\top}\rho_G X_i,Y_i)] \;,
\end{align*}
which is unique because $\loss$ is strictly convex by assumption. Assume that $\hat{w}$ is not \ginv. 
Applying Jensen's inequality, we have
\begin{align*}
  \eRiskAug(f_{\hat{w}},\trdata) & = \frac{1}{n}\sum_{i=1}^n \bbE_{G\sim\haar}[\loss(\hat{w}^{\top} \rho_G X_i,Y_i)] \\
  & > \frac{1}{n}\sum_{i=1}^n \loss(\bbE_{G\sim\haar}[\hat{w}^{\top}\rho_G X_i,Y_i)] \\
  & = \frac{1}{n}\sum_{i=1}^n \loss(\bbE_{G\sim\haar}[(\rho_{G^{-1}}^* \hat{w})]^{\top} X_i,Y_i)] \\
  & = \frac{1}{n}\sum_{i=1}^n \loss(\invf{\hat{w}} X_i, Y_i) = \eRiskAug(f_{\invf{\hat{w}}},\trdata) \;,
\end{align*}
which cannot be the case because $\hat{w}$ minimizes $\eRiskAug$. Therefore, $\hat{w}$ must be \ginv.
\end{proof}

\subsection{Proof of \texorpdfstring{\cref{thm:pac:bayes:da}}{Theorem 4}}

The proof of our PAC-Bayes bound for data augmentation makes use of the following result due to \citet{leveretal2013tighterPACbayes}.

\begin{theorem}[\citet{leveretal2013tighterPACbayes}, Theorem 1] \label{lem:lever:bound}
  For any functions $A(f)$, $B(f)$ over $\fclass$, either of which may be a statistic of the training data $\trdata$, any distribution $P$ over $\fclass$, any $\delta \in (0,1]$, any $t > 0$, and a convex function $\scD : \bbR \times \bbR \to \bbR$, with probability $\dgd^n$ at least $1 - \delta$, for all distributions $Q$ on $\fclass$,
  \begin{align} \label{eq:lever:bound}
    \scD\big( \bbE_{f\sim Q}[A(f)],\bbE_{f\sim Q}[B(f)] \big) \leq \frac{1}{t} \bigg( \KL{Q}{P} + \log \frac{\calL_P}{\delta}  \bigg) \;,
  \end{align}
  where $\calL_P: = \bbE_{\trdata\sim \dgd, f\sim P}[e^{t\scD(A(f),B(f))}]$ is the Laplace transform of $\scD(A(f),B(f))$.
\end{theorem}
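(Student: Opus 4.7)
The plan is to follow the standard three-step PAC-Bayes recipe: apply Jensen's inequality to pull the convex function $\scD$ inside the $Q$-expectation, then use the Donsker--Varadhan change-of-measure inequality to switch from an expectation under the (data-dependent) posterior $Q$ to one under the fixed prior $P$ (at the cost of a KL term), and finally apply Markov's inequality to the resulting $\trdata$-dependent moment generating function to turn an in-expectation statement into a high-probability one, uniformly over $Q$.

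First, since $\scD$ is convex on $\bbR\times\bbR$, Jensen's inequality applied to the product measure gives
\begin{equation*}
  \scD\big(\bbE_{f\sim Q}[A(f)],\,\bbE_{f\sim Q}[B(f)]\big) \leq \bbE_{f\sim Q}\big[\scD(A(f),B(f))\big],
\end{equation*}
so it suffices to control $\bbE_Q[\scD(A,B)]$ uniformly over $Q$. Second, for any $t>0$ and any measurable $g$, the Donsker--Varadhan variational identity yields
\begin{equation*}
  t\,\bbE_{f\sim Q}[g(f)] \leq \KL{Q}{P} + \log \bbE_{f\sim P}\big[e^{t g(f)}\big],
\end{equation*}
valid for every $Q\ll P$. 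Taking $g(f)=\scD(A(f),B(f))$ and combining with the first step, we get, for every $Q$,
\begin{equation*}
  t\,\scD\big(\bbE_Q[A],\bbE_Q[B]\big) \leq \KL{Q}{P} + \log M(\trdata), \qquad M(\trdata) := \bbE_{f\sim P}\big[e^{t\scD(A(f),B(f))}\big].
\end{equation*}
The key observation is that $M(\trdata)$ is a function of the sample alone and does not depend on $Q$.

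Third, Markov's inequality applied to the non-negative random variable $M(\trdata)$, whose expectation over $\trdata\sim\dgd^n$ equals $\calL_P$ by Fubini, gives $\Pr_{\trdata}(M(\trdata)\geq \calL_P/\delta)\leq \delta$. Hence, with probability at least $1-\delta$ over $\trdata$, $\log M(\trdata) \leq \log(\calL_P/\delta)$, and on this single high-probability event the displayed inequality holds simultaneously for every $Q$; dividing through by $t>0$ yields the stated bound. I do not expect a serious obstacle here: the only technical care needed is measurability/Fubini (so that $\calL_P$ is well-defined and Markov applies) and absolute continuity of $Q$ with respect to $P$ (without which the bound is vacuous anyway). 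The conceptual ingenuity in the Lever et al.\ formulation is really just packaging the argument behind a generic convex comparison function $\scD$, so that the same proof will later specialize to the Catoni-style bound \eqref{eq:catoni:bound} used in \cref{thm:pac:bayes:da}.
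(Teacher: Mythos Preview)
The paper does not actually provide a proof of this statement: it is quoted verbatim as Theorem~1 of \citet{leveretal2013tighterPACbayes} and used as a black box in the proof of \cref{thm:pac:bayes:da}. Your proposal is correct and is precisely the standard argument behind the Lever--Laviolette--Shawe-Taylor result---Jensen for the convex $\scD$, Donsker--Varadhan to change measure from $Q$ to $P$, then Markov on the $Q$-independent quantity $M(\trdata)$---so there is nothing to compare against here beyond noting that your write-up matches the original source's approach.
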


As \citet{leveretal2013tighterPACbayes} discuss, many PAC-Bayes bounds in the literature can be obtained as special cases of \cref{lem:lever:bound}, including Catoni's bound in \cref{thm:catoni:bound}. In that case, which applies to 0-1 loss, $t=n$, $A(f) = \eRisk(f,\trdata)$, $B(f) = \risk(f)$, and 
\begin{align}
  \scD_C(q,p) & := -\log (1-p(1-e^{-C})) - C q \;, \quad q,p \in (0,1), \ C > 0 \\ 
    & = -\log \bbE_{Z \sim \text{Bern}(p)}[e^{-CZ}] - Cq \;.
\end{align}
Basic calculations show that with these quantities, $\calL_P=1$.

Recall that
\begin{align} %\label{eq:risk:defs}
  \eRisk(f,\trdata) &:= \frac{1}{n} \sum_{i=1}^n \loss(f(X_i),Y_i) \label{eq:risk:1} \\
  \eRiskAug(f,\trdata) &:= \frac{1}{n} \sum_{i=1}^n \bbE_{G\sim\haar}[\loss(f(G X_i),Y_i)] \label{eq:risk:2} \\
  \eRiskAugMC(f,\trdata) &:= \frac{1}{nm} \sum_{i=1}^n \sum_{j=1}^m \loss(f(G_{ij} X_i),Y_i) \label{eq:risk:3} \;.
\end{align}

Let $(G_{ij})$ denote the collection of $m\cdot n$ random augmentation transformations sampled i.i.d.\ from $\haar$. 
\begin{lemma} \label{lem:div:bounds}
  Let $\loss$ be binary loss, $P$ any distribution on $\fclass$, and assume that $\dgd$ is \ginv. Then
  \begin{align} \label{eq:div:bound:aug}
    \bbE_{f\sim P}\big[\bbE_{\trdata\sim\dgd}\big[ e^{n\scD_C(\eRiskAug(f,\trdata),\risk(f))}   \big] \big] & \leq \bbE_{f\sim P}\big[\bbE_{\trdata\sim\dgd}[e^{n\scD_C(\eRisk(f,\trdata),\risk(f))}] \big] = 1
  \end{align}
  and
  \begin{align} \label{eq:div:bound:aug:mc}
    \bbE_{f\sim P}\big[\bbE_{\trdata\sim\dgd}\big[ e^{n\scD_C(\eRiskAugMC(f,\trdata),\risk(f))}   \big] \big] & \leq \bbE_{f\sim P}\big[\bbE_{\trdata\sim\dgd}[e^{n\scD_C(\eRisk(f,\trdata),\risk(f))}] \big] = 1 \;.
  \end{align}
\end{lemma}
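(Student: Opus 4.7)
The plan is to prove both inequalities pointwise in $f$ (integration against $P$ then preserves both sides) and reduce the right-hand equality to the standard Catoni identity. Expanding $\scD_C$, $e^{n\scD_C(q,\risk(f))}=(1-\risk(f)(1-e^{-C}))^{-n}e^{-nCq}$, and since the prefactor is deterministic given $f$, it suffices to compare $\bbE_\trdata[e^{-nC\widehat R}]$ across the three risk estimators. The trailing ``$=1$'' is standard: for $\loss\in\{0,1\}$, $\bbE[e^{-C\loss(f(X),Y)}]=1-\risk(f)(1-e^{-C})$, and independence of the $n$ samples of $\trdata$ gives $\bbE_\trdata[e^{-nC\eRisk(f,\trdata)}]=(1-\risk(f)(1-e^{-C}))^n$, which cancels the prefactor exactly.

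The core technique for both inequalities is Jensen plus invariance. For the exact augmented risk, I would write $\eRiskAug(f,\trdata)=\bbE_{G_1,\dots,G_n\sim\haar}[\frac{1}{n}\sum_i\loss(f(G_iX_i),Y_i)]$ and apply Jensen's inequality to the convex map $x\mapsto e^{-nCx}$, obtaining $e^{-nC\eRiskAug(f,\trdata)}\leq\bbE_{G_1,\dots,G_n}[e^{-C\sum_i\loss(f(G_iX_i),Y_i)}]$. Taking $\bbE_\trdata$ and swapping orders via Fubini, for every fixed realization $(G_1,\dots,G_n)$ independent of $\trdata$, $\grp$-invariance of $\dgd$ together with the independence structure gives $(G_iX_i,Y_i)_{i\leq n}\equdist \trdata$, so the inner expectation equals $\bbE_\trdata[e^{-nC\eRisk(f,\trdata)}]$ regardless of $G$. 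Restoring the prefactor yields \eqref{eq:div:bound:aug}.

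For the Monte Carlo case, I would decompose $\eRiskAugMC(f,\trdata)=\frac{1}{m}\sum_{j=1}^m\widetilde R^{(j)}$, where $\widetilde R^{(j)}:=\frac{1}{n}\sum_i\loss(f(G_{ij}X_i),Y_i)$ is the empirical risk on the $j$th augmented copy of the dataset. Convexity of $e^{-nCx}$ and Jensen give $e^{-nC\eRiskAugMC}\leq\frac{1}{m}\sum_j e^{-nC\widetilde R^{(j)}}$. For each fixed $j$, the variables $(G_{1j},\dots,G_{nj})$ are i.i.d.\ $\haar$ and independent of $\trdata$, so the same invariance argument yields $\widetilde R^{(j)}\equdist \eRisk(f,\trdata)$ and hence $\bbE[e^{-nC\widetilde R^{(j)}}]=\bbE_\trdata[e^{-nC\eRisk(f,\trdata)}]$. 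Averaging over $j$ and restoring the prefactor closes \eqref{eq:div:bound:aug:mc}.

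The main obstacle is simply careful bookkeeping of the randomness: one must keep track of which variables are being averaged over at each step and verify that the $G_{ij}$'s (or $G_i$'s) are independent of $\trdata$ so that componentwise invariance of $\dgd$ lifts, via independence across $i$, to equality in distribution of the whole (possibly augmented) dataset. Once this is set up, the Jensen step is automatic, binary loss enters only at the final Catoni identity, and the proof reduces to a two-line calculation per case.
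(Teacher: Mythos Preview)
Your proposal is correct and follows essentially the same approach as the paper: expand $\scD_C$, apply Jensen's inequality to the convex map $x\mapsto e^{-Cx}$ (or $e^{-nCx}$), then use Fubini and $\grp$-invariance of $\dgd$ to reduce to the non-augmented risk, with the ``$=1$'' following from the Bernoulli identity. The only cosmetic difference is the order of decomposition in the Monte Carlo case: the paper factors over samples $i$ first and then applies Jensen to the average over augmentations $j$ per sample, whereas you apply Jensen to the average over $j$ globally and then argue each column $\widetilde R^{(j)}$ is equal in distribution to $\eRisk(f,\trdata)$; both organizations yield the same bound by the same mechanism.
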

\begin{proof}
  Since the observations $(X_i,Y_i)$ are i.i.d., the expectation over $\trdata$ on the left-hand side of \eqref{eq:div:bound:aug} requires evaluating $\bbE_{\trdata\sim\dgd}\big[ e^{-C\bbE_{G\sim\haar}[ \loss(f(X_i),Y_i))]} \big]$. Using the convexity of $e^{-x}$, Jensen's inequality and Fubini's theorem yield
  \begin{align} \label{eq:augrisk:laplace}
    \bbE_{(X_i,Y_i)\sim\dgd}\big[ e^{-C\bbE_{G\sim\haar}[ \loss(f(G X_i),Y_i))]} \big] 
      & \leq \bbE_{(X_i,Y_i)\sim\dgd}\big[ \bbE_{G\sim\haar} \big[ e^{-C \loss(f(G X_i),Y_i))} \big] \big] \\
      & = \bbE_{G\sim\haar} \big[ \bbE_{(X_i,Y_i)\sim\dgd}\big[  e^{-C \loss(f(G X_i),Y_i))} \big] \big] \;. \nonumber
  \end{align}
  Now, $\grp$-invariance of $\dgd$ implies that $\bbE_{(X_i,Y_i)\sim\dgd}[h(gX_i,Y_i)] = \bbE_{(X_i,Y_i)\sim\dgd}[h(X_i,Y_i)]$ for all measurable functions $h : \calX \times \calY \to \bbR_+$ and all $g\in\grp$, which extends to {independent} random $G$ by Fubini's theorem. Therefore,
  \begin{align*}
    \bbE_{G\sim\haar} \big[ \bbE_{(X_i,Y_i)\sim\dgd}\big[  e^{-C \loss(f(G X_i),Y_i))} \big] \big]
      = \bbE_{(X_i,Y_i)\sim\dgd}\big[  e^{-C \loss(f(X_i),Y_i))} \big] = \bbE_{Z\sim\text{Bern}(\risk(f))}[e^{-CZ}] \;,
  \end{align*}
  which implies \eqref{eq:div:bound:aug}.

  For the second inequality \eqref{eq:div:bound:aug:mc}, observe that by Jensen's inequality,
  \begin{align*}
    \bbE_{\trdata\sim\dgd}\big[ e^{-nC\eRiskAugMC(f,\trdata)} \big]
      & = \prod_{i=1}^n \bbE_{(X_i,Y_i)\sim\dgd}\bigg[ \bbE_{(G_{ij})_{j=1}^m\sim\haar}\bigg[ \exp\bigg(-\frac{C}{m}\sum_{j=1}^m \loss(f(G_{ij}X_i),Y_i) \bigg) \bigg] \bigg] \\
      & \leq \prod_{i=1}^n \bbE_{(X_i,Y_i)\sim\dgd}\bigg[ \bbE_{(G_{ij})_{j=1}^m\sim\haar}\bigg[ \frac{1}{m} \sum_{j=1}^m e^{-C \loss(f(G_{ij}X_i),Y_i) } \bigg] \bigg] \\
      & = \prod_{i=1}^n \bbE_{(X_i,Y_i)\sim\dgd}\big[ \bbE_{G\sim\haar} \big[ e^{-C \loss(f(G X_i),Y_i) } \big] \big]
  \end{align*}
  Using the $\grp$-invariance of $\dgd$ once again, we have
  \begin{align*}
    \bbE_{\trdata\sim\dgd}\big[ e^{-nC\eRiskAugMC(f,\trdata)} \big] \leq \bbE_{\trdata\sim\dgd}\big[ e^{-nC\eRisk(f,\trdata)} \big] = \big(\bbE_{Z\sim\text{Bern}(\risk(f))}[e^{-CZ}] \big)^n \;,
  \end{align*}
  which implies \eqref{eq:div:bound:aug:mc}.
\end{proof}

\begin{proof}[Proof of \cref{thm:pac:bayes:da}]
  \Cref{thm:pac:bayes:da} follows from \cref{lem:lever:bound,lem:div:bounds}. In particular, observe that the expectation of any of the risks \eqref{eq:risk:1}--\eqref{eq:risk:3} over $\trdata$ and $f\sim Q$ is $\risk(Q)$. Therefore, using any of those risks as $A(f)$ in \cref{lem:lever:bound} with $B(f) = \risk(f)$ will result in valid a PAC-Bayes bound; the only quantity that changes between the three situations is $\calL_P$ in \eqref{eq:lever:bound}. \Cref{lem:div:bounds} establishes that $\calL_P$ when $A(f)$ is either of $\eRiskAug(f,\trdata)$ or $\eRiskAugMC(f,\trdata)$ is upper-bounded by $\calL_P$ when $A(f) = \eRisk(f,\trdata)$, which is equal to 1.

  The particular bound \eqref{eq:catoni:bound} follows from algebraic manipulations of \eqref{eq:lever:bound}.
\end{proof}

\subsection{Proof of \texorpdfstring{\cref{prop:empirical:risk:order}}{Proposition 5}} \label{appx:proof:prop:empirical:risk:order}

\begin{proof}[Proof of \cref{prop:empirical:risk:order}]
  Let $\grp$ be a group with some probability measure $\haar$, and $\fclass$ a class of functions $f : \calX \to \bbR$. Let $\loss : \bbR \times \bbR \to \bbR_+$ be a loss function such that $\loss(f(\argdot),\argdot)\in L_2(\dgd)$ for every $f \in \fclass$. Then the augmented risk of any function $f \in \fclass$ is
  \begin{align*}
    \eRiskAug(f,\trdata) = \frac{1}{n}\sum_{i=1}^n  \bbE_{G\sim\haar}[\loss(f(G  X_i),Y_i)] \;.
  \end{align*}
  If $\loss$ is convex in the first argument, then by Jensen's inequality,
  \begin{align} \label{eq:jensens}
    \bbE_{G\sim\haar}[\loss(f(G X_i),Y_i)] \geq \loss(\bbE_{G\sim\haar}[f(G X_i)],Y_i) \;, \quad i = 1,2,\dotsc,n \;.
  \end{align}
  On the other hand, the $\grp$-symmetrization of $f(X)$ is $\invf{f}(X) = \bbE_{G\sim\haar}[f(G X)]$, with augmented risk
  \begin{align*}
    \eRiskAug(\invf{f},\trdata) & = 
      \frac{1}{n}\sum_{i=1}^n  \bbE_{G\sim\haar}[\loss(\bbE_{G\sim\haar}[f(G X_i)],Y_i)] \\
      & =\frac{1}{n}\sum_{i=1}^n  \loss(\bbE_{G\sim\haar}[f(G X_i)],Y_i) \\
      & = \eRisk(\invf{f},\trdata) \;.
  \end{align*}
  Combined with \eqref{eq:jensens}, the reduction in empirical augmented risk %\eqref{eq:augrisk:convex} 
  follows. The reduction in $\eRiskAug(Q,\trdata)$ follows trivially. 

  The variance-reduction %\eqref{eq:augrisk:variance} 
  is established by extending the argument in the proof of \cref{prop:da:variance:reduction}. Specifically, by the conditional Jensen's inequality,
  \begin{align*}
    \Var_{\trdata\sim\dgd^n}\big[  \eRiskAug(f,\trdata)  \big] = \Var[\bbE[\eRiskAug(f,\trdata) \mid \Orbit^n]] \geq \Var[ \bbE[ \eRisk(\invf{f},\trdata) \mid \Orbit^n]  ] = \Var_{\trdata\sim\dgd^n}\big[ \eRisk(\invf{f},\trdata) \big] \;.
  \end{align*}
\end{proof}

\subsection{Proof of \texorpdfstring{\cref{lem:pushforward:KL,lemma:KL:gen}}{Lemma 6 and Theorem 7}} \label{appx:proof:lem:pushforward:KL}

\def\tmu{\tilde{\mu}}
\def\tnu{\tilde{\nu}}

The proof of \cref{lem:pushforward:KL} relies on the chain rule of relative entropy. 
Let two probability measures, $\tmu \ll \tnu$ defined on the product space $(E_1 \times E_2, \calE_1 \otimes \calE_2)$, have marginal measures $\tmu_1\ll \tnu_1$ on $(E_1,\calE_1)$ (respectively, $\tmu_2 \ll \tnu_2$ on $(E_2,\calE_2)$) and regular conditional probability measures $\tmu_{2|1}\ll\tnu_{2|1}$ (resp.\ $\tmu_{1|2}\ll\tnu_{1|2}$). Recall the chain rule of relative entropy is
\begin{align} \label{eq:chain:rule}
  \KL{\tmu}{\tnu} = \KL{\tmu_1}{\tnu_1} + \bbE_{\tmu}\bigg[ \log \frac{d\tmu_{2|1}}{d\tnu_{2|1}}  \bigg] = \KL{\tmu_2}{\tnu_2} + \bbE_{\tmu}\bigg[ \log \frac{d\tmu_{1|2}}{d\tnu_{1|2}}  \bigg] \;.
\end{align}
Observe that each of the terms in the equalities is non-negative.

\begin{proof}[Proof of \cref{lem:pushforward:KL}]

  Given probability measures on $(E_1,\calE_1)$ $\mu \ll \nu$ (with density $m$ such that $\mu = m\cdot \nu$) and a measurable map $\psi : (E_1,\calE_1) \to (E_2,\calE_2)$, construct the probability measure $\tmu$ on $(E_1 \times E_2, \calE_1 \otimes \calE_2)$ as
  \begin{align*}
    \tmu(A \times B) = \mu(A \cap \psi^{-1}B) = \int_A \mu(dx_1) \int_{B} \delta_{\psi(x_1)}(dx_2) \;, \quad A \in \calE_1,\ B \in \calE_2 \;,
  \end{align*}
  and likewise for $\tnu$. Then in the notation of \eqref{eq:chain:rule}, $\tmu_1 = \mu\ll \nu = \tnu_1$, and $\tmu_{2|1} = \delta_{\psi(x_1)} = \tnu_{2|1}$. Therefore,
  \begin{align}
    \KL{\tmu}{\tnu} = \KL{\tmu_1}{\tnu_1} = \KL{\mu}{\nu} \;.
  \end{align}
  Alternatively, $\tmu_2 = \mu\circ\psi^{-1}$, $\tnu_2 = \nu\circ\psi^{-1}$, and it is straightforward to show that
  \begin{align}
    \bbE_{\tmu}\bigg[ \log \frac{d\tmu_{1|2}}{d\tnu_{1|2}}  \bigg] = \bbE_{\tmu}\bigg[ \log \frac{d\tmu_{1}}{d\tnu_{1}}  \bigg] - \bbE_{\tmu}\bigg[ \log \frac{d\tmu_{2}}{d\tnu_{2}}  \bigg] = \bbE_{\mu}\bigg[ \log \frac{m}{m\circ\psi}  \bigg] = \Delta_{\psi}(\mu\ ||\ \nu) \geq 0. \;.
  \end{align}
  Therefore,
  \begin{align}
    \KL{\tmu}{\tnu} = \KL{\mu}{\nu} = \KL{\mu\circ\psi^{-1}}{\nu\circ\psi^{-1}} + \Delta_{\psi}(\mu\ ||\ \nu) \;.
  \end{align}

\end{proof}

\begin{proof}[Proof of \cref{lemma:KL:gen}]
  For $\calX$ a compact metric space and $\calY$ a Polish space, the space $\fclass = C(\calX,\calY)$ of continuous functions $f : \calX \to \calY$ is a Polish space, and therefore it (along with its Borel $\sigma$-algebra $\borel(C(\calX,\calY))$) is a standard Borel space. 
  For a group $\grp$ acting measurably on $\calX$, the symmetrization operator $\symm_{\grp} : \fclass \to \invf{\fclass}$ is measurable, and the product space $(\fclass \times \invf{\fclass}, \borel(\fclass)\otimes\borel(\invf{\fclass}))$ is a standard Borel space. Thus, the conditions of \cref{lem:pushforward:KL} are satisfied and the result follows.
\end{proof}

%%%%%%% EXAMPLES %%%%%%%%

\section{Examples, Counterexamples, Tighter Bounds}

\subsection{Permutation-invariant Boolean Function} \label{appx:example:boolean}

As an illustrative example, we consider the task of learning a permutation-invariant Boolean function. We consider the following toy learning algorithm. For a training set $\trdata$, each observation of which is a pair $(X_i,Y_i) \in \{0,1\}^k \times \{0,1\}$, the algorithm outputs a sample from $Q_\trdata$, the uniform distribution over all $k$-ary Boolean functions which agree with $\trdata$. If the full function space under consideration is the set of $k$-ary Boolean functions $\fclass = \{f : \{0,1\}^k \rightarrow \{0,1\} \}$, then $|\fclass| = 2^{2^k}$. Moreover, the number of Boolean functions consistent with a training data set containing $|\trdata|$ unique binary vectors is $2^{2^k - |\trdata|}$. Thus, letting $P$ denote the uniform distribution over $k$-ary Boolean functions,  
\begin{align*}
\KL{Q}{P} 
 = \log_2 \frac{2^{2^k}}{2^{2^k - |\trdata|}} 
= |\trdata| \leq n \;,
\end{align*}
where for convenience we have used $\log_2$ inside the KL.

In contrast, we can consider the same learning algorithm applied to the class of permutation-invariant Boolean functions.\footnote{Note that the class of permutation-invariant Boolean functions is a strict subset of the Boolean functions, because symmetrization via averaging produces a function with image $[0,1]$ (and thus the Boolean functions are not closed under averaging).} 
The permutation-invariant Boolean functions are those that are constant on all input vectors containing the same number of 1-valued entries, and therefore is equivalent to the set of functions $F_{\textrm{inv}} =\{ f: \{0,\dotsc,k\} \to \{0,1\} \}$, with $|F_{\textrm{inv}}| = 2^{k+1}$. The restriction of the uniform prior $P$ to $F_{\textrm{inv}}$ remains uniform, ${P_{\textrm{inv}}}({f_{\textrm{inv}}})=2^{-(k+1)}$. The restriction of $Q$ depends on $|\trdata|_{\textrm{inv}}$, the number of $j\in\{0,\cdots,k\}$ such that at least observation in $\trdata$ has exactly $j$ 1-valued entries: ${Q_{\textrm{inv}}}({f_{\textrm{inv}}})=2^{-(k+1 - |\trdata|_{\textrm{inv}})}$. 
Thus,
\begin{align*}
  \KL{Q_{\textrm{inv}} }{P_{\textrm{inv}} } = |\trdata|_{\textrm{inv}} \leq |\trdata| \;.
\end{align*}
In this simple case, if the observations are consistent with the assumptions, i.e., the output is constant across input vectors with the same number of 1-valued entries, then the invariant model obtains a KL gap of $|\trdata| - |\trdata|_{\textrm{inv}}$.

\subsection{Counterexamples} \label{appx:counterexamples}

\textbf{Feature averaging and non-convex losses.} We consider the binary classification setting with the zero-one loss and some function class $f$ bounded in $[0,1]$ -- that is $\ell(x, y) = \mathbbm{1}[|f(x) - y| > 1]$. Suppose that there exists some invariance $\grp$ in the data such that $y(x) = y(gx)$ for all $x, g$. Then consider a function which, for some small $\epsilon$, outputs $f(x) = \frac{1}{2} + y\epsilon$ on a $1 - 2\epsilon$ fraction of each equivalence class of the inputs, and $1 - y$ on $2\epsilon$ of the inputs in each equivalence class. Then $\mathbb{E}[f(gx)] = (1 - 2\epsilon) (\frac{1}{2} + y\epsilon) + 2\epsilon (1-y)$. When $y=0$, this expectation is $\frac{1}{2} + \epsilon$, and when $y=1$ it is $\frac{1}{2} [1 - \epsilon - 2\epsilon^2] < \frac{1}{2}$, so the feature-averaged model would have risk 1 whereas the original model had risk 0.

\textbf{Non-uniform data-generating distributions.}  When the data-generating distribution is not uniform over the set $\T$, then performing data augmentation with $\T$ will not necessarily lead to a more accurate estimate of the model's empirical risk. For example, consider the task of learning a function $g$ satisfying $g(x) = g(-x)$, bounded in magnitude by some constant $A$. Suppose, however, that positive numbers are much more likely under the data generating distribution, with $p(\mathbb{R}^+) = 1 - \epsilon$ for small $\epsilon$. Then the function $f(x) = \mathbbm{1}[x>0]g(x)$ will satisfy $\mathbb{E}[\|f(X_S) - g(X_s)\|] \neq \mathbb{E}[ \|f(X_{S^\text{aug}}) - g(X_{S^\text{aug}})\|]$. So the augmented risk is no longer an unbiased estimator of the empirical risk. Further, in this particular case its variance is also higher, as it will be equal to $\frac{1}{2} $Var$(g(x))$, in contrast to $\epsilon \text{Var}(g(x))$.

\subsection{Tighter PAC-Bayes Bounds for Data Augmentation} \label{appx:tighter:pacbayes:da}

Although \cref{thm:pac:bayes:da} establishes that the i.i.d.\ PAC-Bayes bound \eqref{eq:catoni:bound} is valid for exact DA, the proof of \cref{thm:pac:bayes:da} indicates that a tighter bound is possible. In particular, recall that when $\dgd$ is \ginv\ \citep{invariantdistributions,chen2019invariance},
\begin{align*}
  \bbE_{G\sim\haar}[\loss(f(GX),Y)] = \bbE_{(X,Y)\sim\dgd}[\loss(f(X),Y) \mid \Orbit] := \invf{\loss}_f(\Orbit) \;.
\end{align*}
$\invf{\loss}_f(\Orbit)$ is a random variable, the average loss on the random orbit with representative $\Orbit$, whose distribution is induced by $\dgd$. Therefore, we can write $\mathcal{L}_P$ in \eqref{eq:lever:bound} as
\begin{align*}
  \mathcal{L}_P = \bbE_{f\sim P}\bigg[ \bigg(\frac{ \bbE_{\Phi\sim\dgd} \big[ e^{-C\invf{\loss}_f(\Orbit)} \big]}{\bbE_{Z\sim\text{Bern}(\risk(f))}[e^{-CZ}]} \bigg)^n \bigg] \leq 1 \;.
\end{align*}
In general, this cannot be computed in closed form. However, it might be possible to estimate using the data (with appropriate modifications to the resulting bound) and samples $f \sim P$.

%%%%%%% COMPUTATIONS %%%%%%%%

\section{Computation Details for PAC-Bayes Bounds}
\label{appx:computations}

PAC-Bayes bounds for neural networks are computed via the following procedure: a deterministic neural network is trained to minimize the cross-entropy loss on the dataset. After it has reached a suitable training accuracy, we use these parameters as the initialization for the means and variances of the stochastic neural network weights used for the PAC-Bayes bounds. We directly optimize a surrogate of the PAC-Bayes bound (using the cross-entropy loss instead of the zero-one accuracy and using the reparameterization trick to get the derivatives of the variance parameters). The exact computation of the PAC-Bayes bound uses the union bound and discretization of the PAC-Bayes prior as described in \citep{dziugaite2017nonvacuous}. Reported values are at optimization convergence.

\subsection{Experiment parameters and computation details}

The experiment code is provided with the paper submission, but we describe here at a high level the different models used in our empirical evaluations.

\textbf{FashionMNIST CNN:} the convolutional network used for FashionMNIST consists of two convolutional layers (with batch norm and max pooling) followed by a single fully connected layer. 

\textbf{LiDAR Permutation-Invariant Network:} we use a scaled-down version of the PointNet architecture \citep{qi2017pointnet}. We include two layers of 1D convolutions followed by a max-pooling layer that selects the maximum over input points for each channel. This layer is followed by two fully-connected layers leading into the final output. 

\textbf{Partially-Invariant Network:} we alter the previous architecture slightly so that it is only invariant to \textit{subgroups} of the permutation group on its inputs. Specifically, we partition the input into 8 disjoint subsets, and apply the previous model's permutation-invariant embedding layers to each partition. The result is a feature representation that is invariant to permutations within each partition of the input, but not between partitions. This representation is then fed through the same architecture. We note that we keep the number of convolutional filters per layer constant, which results in a larger feature embedding by a factor of 8 that is fed into the first fully connected layer. As a result, this model has significantly more parameters than the fully permutation-invariant model.

\textbf{Fully Connected Network:} the max-pooling operator of the previous two architectures is omitted. This network has many more parameters than either of the first two models, and is not invariant to any subgroup of the permutation group.

%%%%%%% EXTRA FIGURES AND DETAILS %%%%%%%%

\section{Additional Empirical Evaluations}

In addition to the results shown in \cref{fig:acc}, we include further plots to characterize training the FA as opposed to DA, and provide some insights here.
\begin{enumerate}
    \item Feature averaging at evaluation uniformly improves the loss function compared to sampling a single input.
    \item Feature averaging at evaluation doesn't appear to significantly harm accuracy (a non-convex loss function), but doesn't see the same improvement as for the cross-entropy loss.
    \item Models trained with feature averaging tend to achieve lower training loss, but in the exact feature averaging setting this improved training loss is accompanied by increased overfitting.
    \item Models trained with feature averaging perform worse over time when evaluated with a single sample. This gap increases as the model is trained.
\end{enumerate}
\begin{figure}
    \centering
     \label{fig:approxaveraging}
    \includegraphics[width=0.3\textwidth]{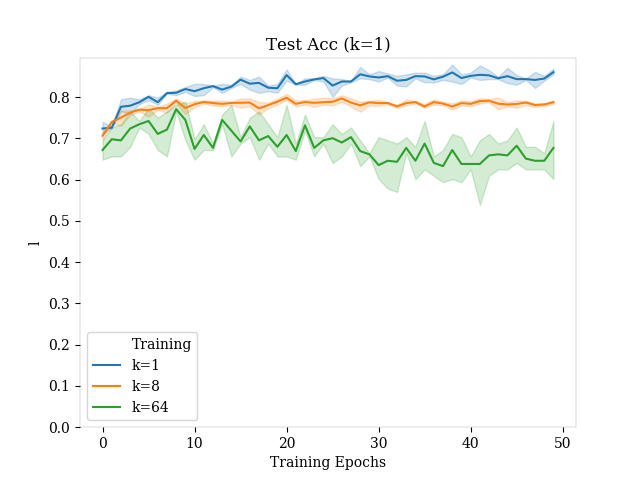}
        \includegraphics[width=0.3\textwidth]{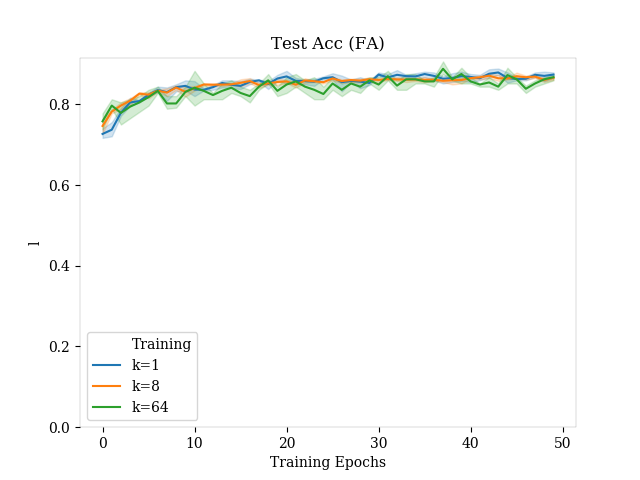}
        \includegraphics[width=0.3\textwidth]{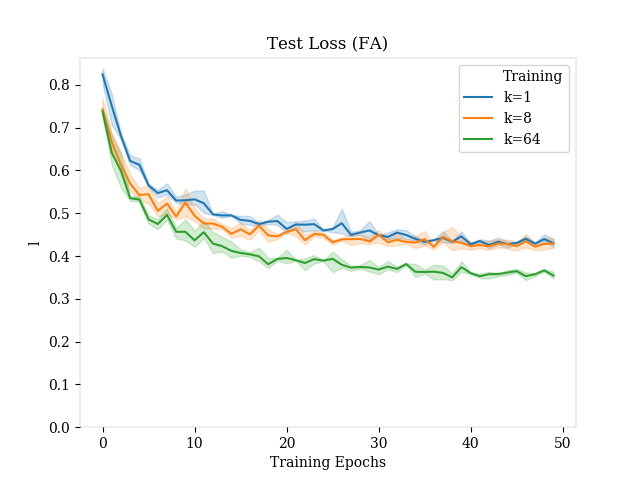}
     \includegraphics[width=0.3\textwidth]{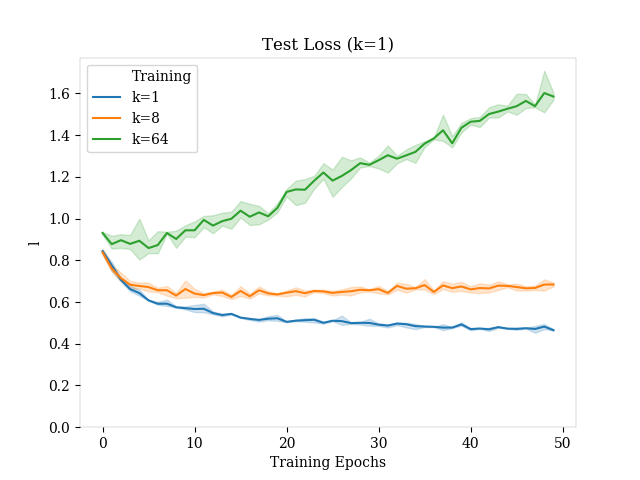}
      \includegraphics[width=0.3\textwidth]{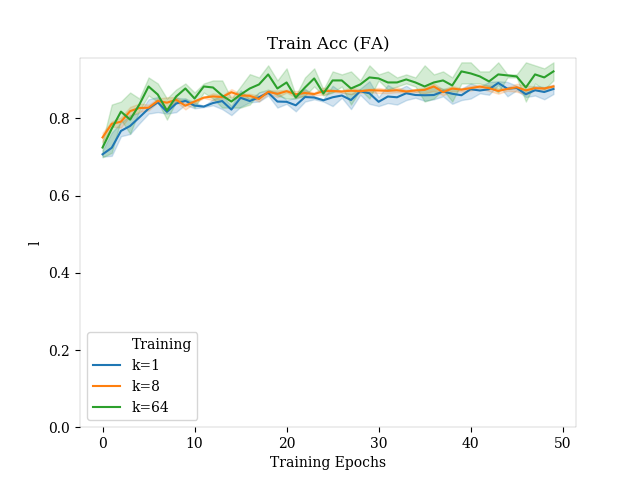}
      \includegraphics[width=0.3\textwidth]{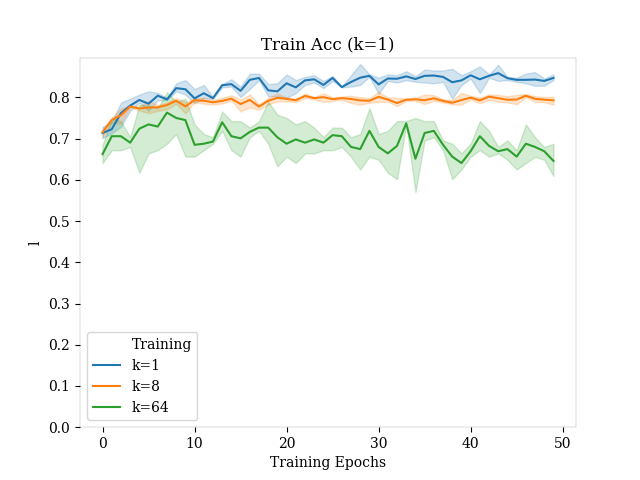}
      
      \includegraphics[width=0.3\textwidth]{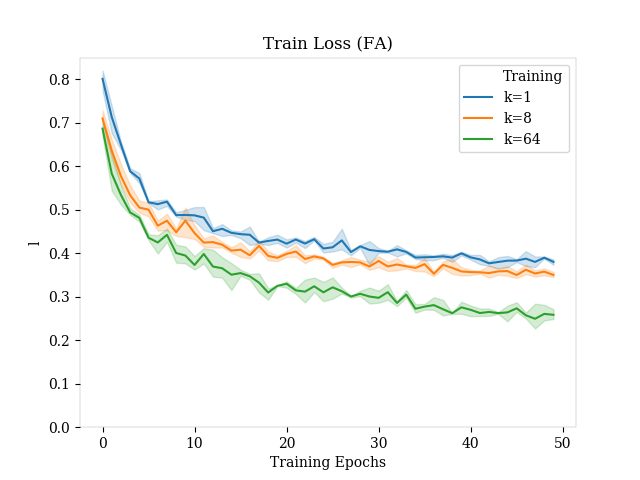}
      \includegraphics[width=0.3\textwidth]{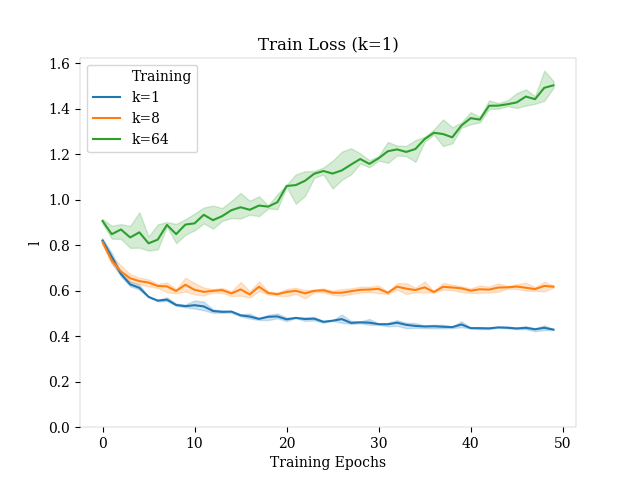}
      \caption{Measures of performance for networks trained with approximate feature averaging. Number of samples used in approximate FA during training range from $k=1$ to $k=64$. FA indicates that the model was evaluated using the same number of samples that it was trained on, while k=1 indicates that a single sample is drawn at evaluation time.}
    \end{figure}
    \begin{figure} \label{fig:exactaveraging}
    \centering
        \includegraphics[width=0.3\textwidth]{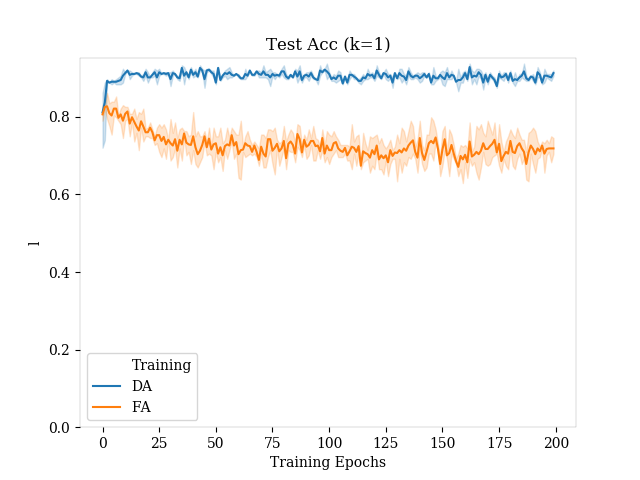}
        \includegraphics[width=0.3\textwidth]{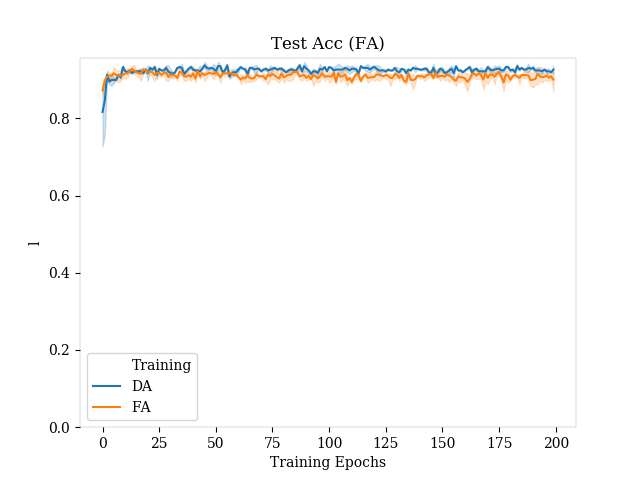}
        \includegraphics[width=0.3\textwidth]{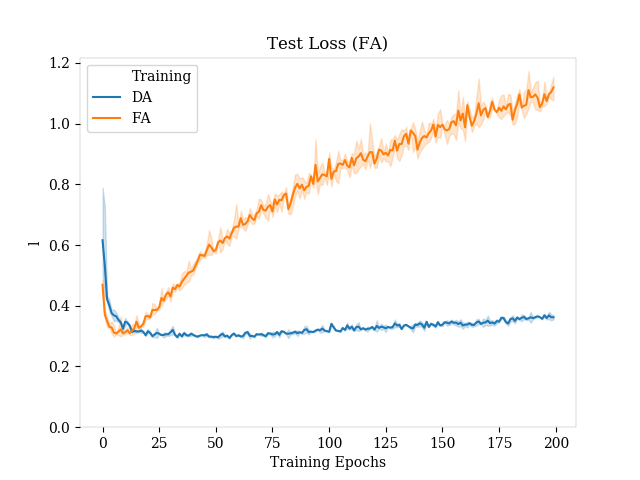}
     \includegraphics[width=0.3\textwidth]{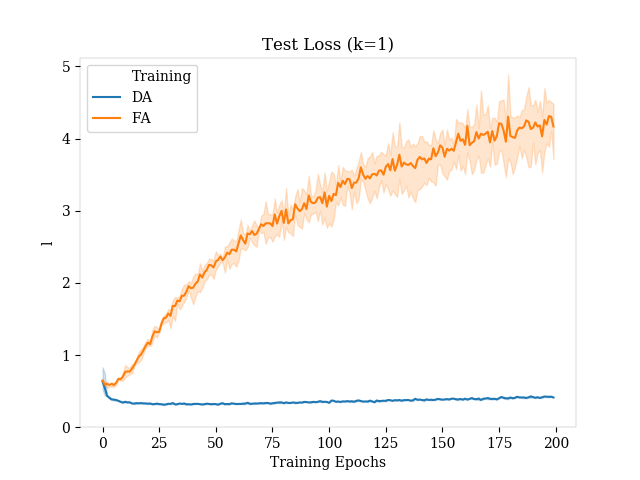}
      \includegraphics[width=0.3\textwidth]{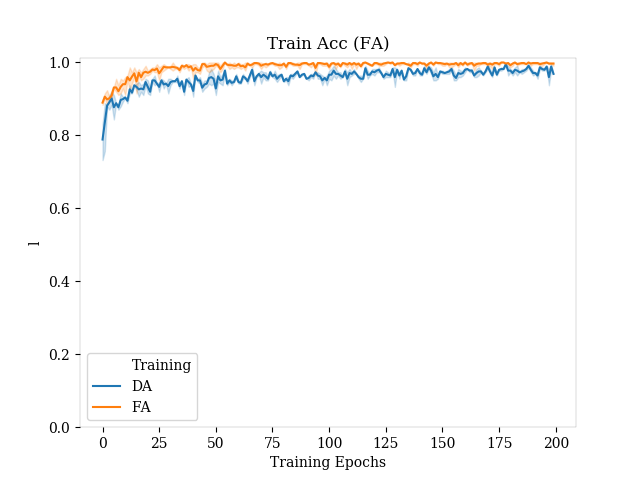}
      \includegraphics[width=0.3\textwidth]{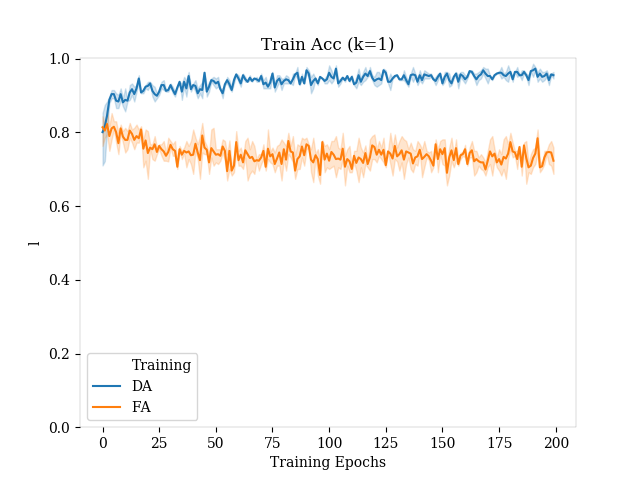}
      
      \includegraphics[width=0.3\textwidth]{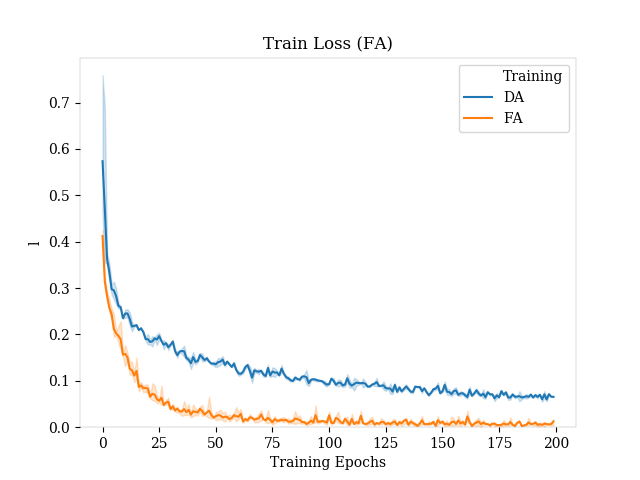}
      \includegraphics[width=0.3\textwidth]{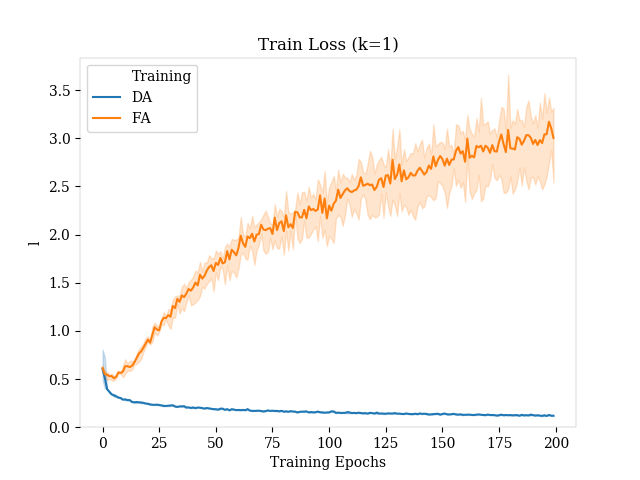}
    \caption{Different measures of performance of networks under different training regimes. Evaluation format (with a single sample or with averaging) is included in title, and training method (trained with data augmentation or feature averaging) is distinguished within each plot by colour.}
   
\end{figure}

\end{document}